\useunder{\uline}{\ul}{}
\newtheorem{lemma}{Lemma}
\newtheorem{proof}{Proof}
\definecolor{cvprblue}{rgb}{0.21,0.49,0.74}
\title{One-for-More: Continual Diffusion Model for Anomaly Detection}
\author{
    {Xiaofan Li}$^1$, Xin Tan$^{1}$, Zhuo Chen$^3$, Zhizhong Zhang$^{1}$\thanks{Corresponding author}, Ruixin Zhang$^5$, Rizen Guo$^6$, \\ 
    Guannan Jiang$^7$, Yulong Chen$^{4}$, Yanyun Qu$^{3,8}$, Lizhuang Ma$^{1,4}$, Yuan Xie$^{1,2}$\thanks{Project leader} \\
    $^1$East China Normal University $^2$Shanghai Innovation Institute\\
    $^3$Xiamen University $^4$Shanghai Jiao Tong University\\
    $^5$Tencent YouTu Lab $^6$Tencent WeChatPay Lab33 $^7$CATL \\
    $^8$Key Laboratory of Multimedia Trusted Perception and Efficient Computing \\Ministry of Education of China\\
    {\tt\small \{funzi\}@stu.ecnu.edu.cn, \{zzzhang, lzma, yxie\}@cs.ecnu.edu.cn} \\ 
}
\begin{document}
\maketitle
\begin{abstract}
% \textsl{``What an AI cannot create, it does not understand.'' -- Ian Goodfellow}.
With the rise of generative models, there is a growing interest in unifying all tasks within a generative framework. Anomaly detection methods also fall into this scope and utilize diffusion models to generate or reconstruct normal samples when given arbitrary anomaly images. However, our study found that the diffusion model suffers from severe ``faithfulness hallucination'' and ``catastrophic forgetting'', which can't meet the unpredictable pattern increments. To mitigate the above problems, we propose a continual diffusion model that uses gradient projection to achieve stable continual learning. Gradient projection deploys a regularization on the model updating by modifying the gradient towards the direction protecting the learned knowledge. But as a double-edged sword, it also requires huge memory costs brought by the Markov process. Hence, we propose an iterative singular value decomposition method based on the transitive property of linear representation, which consumes tiny memory and incurs almost no performance loss. Finally, considering the risk of ``over-fitting'' to normal images of the diffusion model, we propose an anomaly-masked network to enhance the condition mechanism of the diffusion model. For continual anomaly detection, ours achieves first place in 17/18 settings on MVTec and VisA. Code is available at \href{https://github.com/FuNz-0/One-for-More}{https://github.com/FuNz-0/One-for-More}

% Code is available at \href{https://github.com/FuNz-0/PromptAD.git}{https://github.com/FuNz-0/PromptAD.git} 
\end{abstract}    
\section{Introduction}
\label{sec:intro}

Anomaly detection (AD) \cite{wuyao1, promptad} has a wide array of applications in both medical and industrial fields. Traditional methods typically operate on a one-to-one paradigm, where a customized model is trained for a specific category or a kind of industrial product. The one-for-one mode severely limits the generalization of the model. Instead, the one-for-all paradigm proposes that training a single model can perform anomaly detection across all classes. However, in real-world scenarios, the pattern increments are unpredictable, necessitating the model's ability to learn continuously, which we refer to as the on-for-more paradigm. This paper focuses on the task of continuous anomaly detection and proposes a \textbf{C}ontinual \textbf{D}iffusion model for \textbf{A}nomaly \textbf{D}etection (\textbf{CDAD}).

\begin{figure}[t]
  \centering
   \includegraphics[width=1\linewidth]{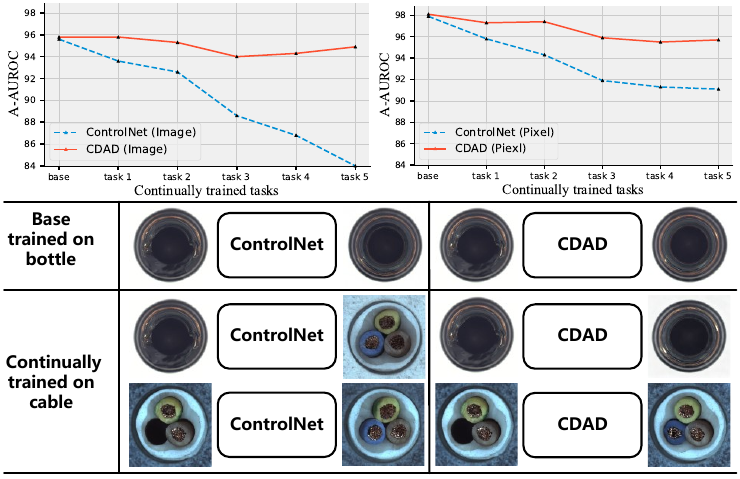}
    \vspace{-15pt}
   \caption{\textbf{Top:} Results of ControlNet and CDAD on image/pixel-level anomaly detection as the number of tasks on the MVTec continually increases. \textbf{Bottom:} ``faithfulness hallucination'' Problem in ControlNet for continuous anomaly detection.}
   \label{fig:intro}
   \vspace{-15pt}
\end{figure}

The diffusion model (DM) has been widely used in anomaly detection \cite{DiffAD, diad, transfusion, GLAD}. Current DM-based methods generate the normal images conditioned on anomaly images through the image-to-image mechanism. The difference between the input and output of the diffusion model is used as the score of anomaly detection, which requires the diffusion model to reconstruct anomaly areas of the input image. However, our study found that the image-to-image diffusion model suffers from ``catastrophic forgetting''\cite{ewc} in continuously generating diverse samples. Taking ControlNet \cite{controlnet} as an example, as shown in Figure \ref{fig:intro} (top), the performance on the base task (10 classes in MVTec \cite{MvTec}) is excellent but deteriorates seriously as the model continues to be trained on new tasks (new classes). Furthermore, as shown in Figure \ref{fig:intro} (bottom), the ControlNet \cite{controlnet}, after training in the latest class, suffers from ``faithfulness hallucination'' \cite{hall}. Specifically, the model's output becomes inconsistent with previously trained samples, leading to a failure in detecting anomalies. In this paper, we mitigate the above problem from two aspects: 1) use a stable parameter update strategy, and 2) strengthen the conditional mechanism of image-to-image generation. 

In this paper, we propose a continual diffusion model framework that projects the updating gradient of new tasks onto a subspace orthogonal to the significant representation of previous tasks \cite{gpm, qiao}, thus eliminating the effect of the update gradient on the previous feature space. Since the significant representation is calculated through singular value decomposition (SVD) on the matrix unfolded by intermediate features, the gradient projection operation requires a lot of memory. However, the Markov-based diffusion process significantly increases the memory overhead. Taking U-Net of stable diffusion \cite{sd} as an example, computing the significant representation of \textbf{10} images consumes almost an additional \textbf{157 GB} of memory (see Table \ref{memory}), and more images need more memory. To address this issue, we propose an iterative singular value decomposition (iSVD) method based on linear transitivity theoretical property. Through the online iterative operation, iSVD only needs about \textbf{17 GB} memory consumption to calculate the significant representation of \textbf{any number} of images. This paper demonstrates and validates the effectiveness of iSVD from both theoretical and experimental perspectives.

Another issue of DM-based anomaly detection methods \cite{DiffAD, diad} is that these methods tend to ``over-fitting'' normal samples rather than focusing on ``reconstructing'' anomalous regions, which weakens the role of the image-to-image conditioning mechanism and aggravates the hallucination problem in continuous learning. To this end, we propose the anomaly-masked network. In it, we use a CNN to encode the input image and match its size with the input of U-Net, and we also design a non-local encoder using a transformer \cite{Transformer} structure to perceive the global information. The neighbor-masked self-attention \cite{uniad} and anomaly-masked loss are used to mask out abnormal features, which preserves normal features output by the local encoder. As a result, the diffusion model pays more attention to the reconstruction of abnormal regions while avoiding its overfitting to generate normal images.  Figure \ref {fig:intro} illustrates our great advantages, compared with ControlNet \cite{controlnet}, CDAD can well overcome the hallucination problem and effectively alleviate the forgetting of continuous anomaly detection. 

To summarize, the main contributions of this paper are:
\begin{itemize}
    \item We propose a continual diffusion framework by modifying the gradient to alleviate the issues of forgetting and hallucination caused by continual learning.

    \item Iterative singular value decomposition is proposed, which can greatly reduce the memory consumption of the gradient projection.

    \item We propose the anomaly-masked network, which can enhance the condition mechanism of image-to-image generation to avoid the side effect of ``over-fitting'' in anomaly detection.
	
    \item For image-level and pixel-level anomaly detection, our method achieves first place in 17/18 continual anomaly detection settings on MVTec \cite{MvTec} and VisA \cite{Visa}.
\end{itemize}

%-------------------------------------------------------------------------
\section{Related Work}

\noindent \textbf{Diffusion models} have achieved significant advancements in the field of image generation \cite{controlnet, sd, dreameboot},which aids other visual tasks \cite{VS-Boost, En-compactness, li2018, ctnet}. They are also widely used in anomaly detection tasks, with some methods \cite{DiffAD, diad, GLAD} utilizing diffusion models to reconstruct anomalous regions of an image. Additionally, some approaches \cite{huteng} use diffusion models to generate various types of defects, which then assist in training anomaly detection models.

\noindent \textbf{One-for-One Anomaly Detection} is the classical anomaly detection framework, which trains a customized model for each class to detect anomalies. The feature embedding methods \cite{PaDiM, PatchCore, SPADE, DifferNet, promptad} extract the patch features of the normal image and store them as memory, then the distance between the test image features and the memory is used as anomaly scores. The knowledge distillation methods \cite{MKD, EfficientAD, FPN, MemKD} compute anomaly scores by the difference between the teacher network and the student network. The reconstruction methods \cite{THFR, FastRecon} train the network to reconstruct abnormal regions into normal regions, and then calculate the anomaly score by the difference. In addition, some methods \citet{Draem, NSA, SimpleNet, unified} assist the model in identifying anomalies by generating anomalies. The one-for-one paradigm has achieved almost saturated performance, but the above methods are unsuitable for multi-class tasks.

\noindent \textbf{One-for-All Anomaly Detection} is a recently emerging multi-class anomaly detection paradigm, which aims to train a model that can detect multiple classes. UniAD \cite{uniad} is the pioneer method, which uses transformer architecture to reconstruct features and proposes neighbor-masked attention to alleviate the ``identical shortcut'' problem. Subsequently, many multi-class anomaly detection methods \cite{mambaad, hvq, he2024learning} have been proposed, which are based on the paradigm of feature reconstruction. DiAD \cite{diad} introduces the latent diffusion model \cite{sd} into multi-class anomaly detection, and proposes the semantic-guided network as the condition mechanism for the generation process. The above methods require all classes to be trained simultaneously, and their performance degrades as the number of classes increases. 

\noindent \textbf{One-for-More Anomaly Detection} is the focus of this paper, which enables models to continuously learn anomaly detection for new categories without backtracking on previous tasks. DNE \cite{dne} and UCAD \cite{ucad} propose using an additional memory bank to store the knowledge of old tasks. The most direct issue is that as tasks increase, the memory bank gradually expands. IUF \cite{iuf} introduces a semantic compression loss to retain the knowledge of previous tasks, however, when one incremental step has many classes, it will still cause a large forgetting of the previous tasks.

\begin{figure*}[t]
  \centering
   \includegraphics[width=1.0\linewidth]{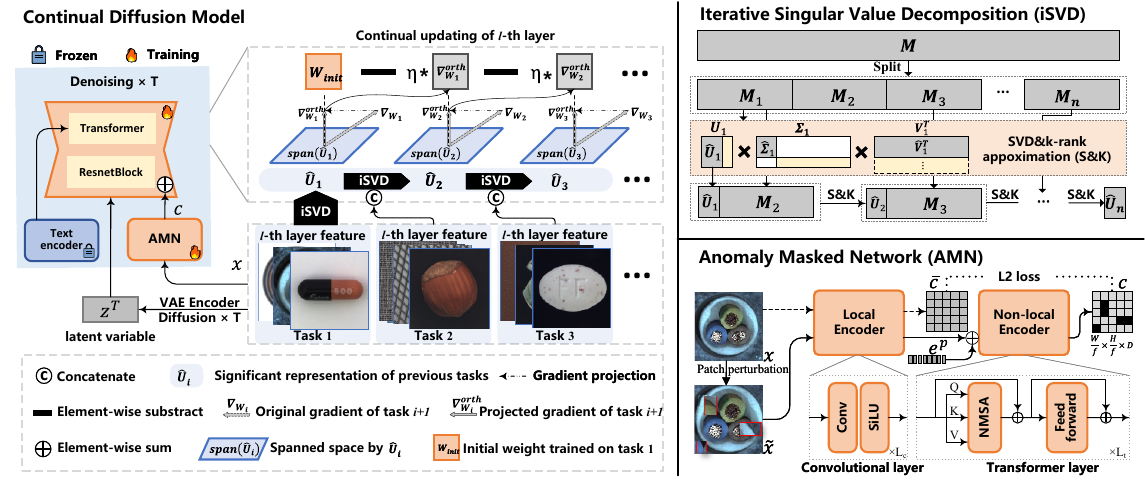}
    
   \caption{Illustration of CDAD. Left is the overall framework of the continual diffusion model. Right is the proposed iterative singular value decomposition and anomaly-masked network.}
   \label{method}
   \vspace{-15pt}
\end{figure*}

\section{Method}
% This section gives a brief review of the preliminary works. 

\subsection{Overview}
An overview of our proposed CDAD is illustrated in Figure \ref {method}, our goal is to design a continuously updated diffusion model via task-by-task training to generate or reconstruct various kinds of normal images and then measure the difference between the generated normal image and the test image, to determine whether there exists a defect. The continual diffusion model is based on a pre-trained variational auto-encoder (VAE) and controls the denoise process conditioned on input images. Note that the input of the text encoder is a space character. The continuous update is achieved by modifying the gradient with the direction that has less impact on previous tasks, such that we don't require previous samples and there is no additional inference cost. However, this projection process requires huge memory overhead of computing significant representation, so the iterative singular value decomposition (iSVD) is proposed. In addition, the anomaly-masked network is proposed masking out anomaly features while retaining normal features to serve as conditions for the U-Net. 

During the inference phase, a testing image $x_{test}$ with unknown categories is first encoded into latent space variable by VAE encoder $z^0=\mathcal{E}(x_{test})$ and perform the diffusion process $z^T=Diffusion(z^0)$. Then $x_{test}$ is used as the input of AMN to control denoising processes $\hat{z}=Denoise(z^T, x_{test})$. Finally, the image is reconstructed by VAE decoder $\Tilde{x}_{test}=\mathcal{D}(\hat{z})$. Besides, the original image also reconstructed $\hat{x}_{test}=\mathcal{D}(\mathcal{E}(x_{test}))$. For anomaly localization and detection, we use pre-trained CNNs $\phi(\cdot)$ to extract intermediate-layer feature maps, the distance of $\hat{x}_{test}$ and $\Tilde{x}_{test}$ in position $(h,w)$ of $l$-th layer feature maps can be calculated as:
\begin{equation}
    \begin{aligned}
        \mathcal{M}^l_{h,w}(\hat{x}_{test}, \Tilde{x}_{test})=\left\|\phi^l(\hat{x}_{test})-\phi^l(\Tilde{x}_{test})\right\|_2^2.
    \end{aligned}
\label{eq:ad1}
\end{equation} 
Then, the anomaly score map for anomaly localization is:
\begin{equation}
    \begin{aligned}
        \mathcal{S}=\sum_{l\in \mathbb{L}}Upsample(\mathcal{M}^l(\hat{x}_{test}, \Tilde{x}_{test}))
    \end{aligned}
\label{eq:ad2}
\end{equation} 
Where $\mathbb{L}$ is the set of intermediate layers and $Upsample(\cdot)$ is a bilinear interpolation that can upsample $\mathcal{M}^l$ from different layers to the same size. The anomaly score is the max value of $\mathcal{S}$ for image-level detection.

\subsection{Continual Diffusion Model}
\label{cdm}

The proposed continual diffusion model (CDM) uses a pre-trained VAE to project the image into the latent space for the diffusion and denoising processes. Refer to latent diffusion model \cite{sd}, the objective function is given by:
\begin{equation}
    \begin{aligned}
\mathcal{L}_{C D M}=\mathbb{E}_{\mathcal{E}(x), \Tilde{x}, \epsilon \sim \mathcal{N}(0,I), t}\left\|\epsilon-\epsilon_\theta\left(z_t, t, \tau_\theta(\Tilde{x})\right)\right\|_2^2,
    \end{aligned}
\label{eq:ddpm}
\end{equation}
where $\epsilon_\theta$ is U-net, $\tau_\theta$ is the anomlay-masked network, $\mathcal{E}(\cdot)$ is the encoder of VAE, $z_t$ is the encoded latent variable after $t$ step diffusion, $\Tilde{x}$ is the condition of U-net. Concretely, $\Tilde{x}$ is the original image perturbed by random patches.

Our study found that the diffusion model suffers from ``catastrophic forgetting'' during continual learning. As shown in Figure \ref{method}, the U-Net is mainly composed of ResnetBlock and Transformer, including the CNN and MLP layers. Since MLP and CNN can be represented as matrix multiplication, we can summarize the forgetting principle from the perspective of keeping the previous feature space unchanged after the gradient update. Taking layer $l$ for example and omitting it, denote that $\boldsymbol{X}_{pre(i)} \in \mathbb{R}^{n \times d_1}$ is the input feature for $l$-th layer of previous $i$ tasks, $\boldsymbol{X}_{i} \in \mathbb{R}^{m \times d_1}$ is the input feature of task $i$, $\boldsymbol{W}_{pre} \in \mathbb{R}^{d_1 \times d_2}$ is previous network parameters and $\boldsymbol{O}_{pre} \in \mathbb{R}^{n \times d_2}$ is old output of previous tasks, which is obtained as:
\begin{equation}
    \begin{aligned}
    \boldsymbol{O}_{pre(i)} = \boldsymbol{X}_{pre(i)} \boldsymbol{W}_{pre(i)}, 
    \end{aligned}
\label{eq:gpm1}
\end{equation}
After training on the $i+1$ task, the network parameters are updated, and the output of previous $i$ tasks is as follows:
% \begin{equation}
%     \begin{aligned}
%     \boldsymbol{W}_{i+1} = \boldsymbol{W}_{i} - \eta \nabla_{\boldsymbol{W}_{i+1}}    \end{aligned}
% \label{eq:gpm2}
% \end{equation}
\begin{equation}
    \begin{aligned}
    \hat{\boldsymbol{O}}_{pre(i)} = \boldsymbol{X}_{pre(i)} \boldsymbol{W}_{i+1}  = \boldsymbol{O}_{pre(i)}  - \eta \boldsymbol{X}_{pre(i)} \nabla_{\boldsymbol{W}_{i}}, 
    \end{aligned}
\label{eq:gpm3}
\end{equation}
where $\eta \neq 0$ is the learning rate, $\nabla_{\boldsymbol{W}_{i}} \in \mathbb{R}^{d_1 \times d_2}$ is the updating gradient of the task $i+1$. After updating, $\hat{\boldsymbol{O}}_{pre} \neq \boldsymbol{O}_{pre}$ which leads to the occurrence of forgetting. To solve this problem, as illustrated in Figure \ref{method} the gradient projection is introduced to project $\nabla_{\boldsymbol{W}_{i+1}}$ into an orthogonal space of $span(\boldsymbol{X}_{pre(i)})$. To obtain this orthogonal space, we first calculate the k-rank column basis $\hat{\boldsymbol{U}}_i \in \mathbb{R}^{d_1 \times k}$ of the previous $i$ tasks is as follows:
\begin{equation}
    \begin{aligned}
		\hat{\boldsymbol{U}}_i = 
		\left\{
		\begin{array}{ll}
			\hat{\mathbf{C}}((\boldsymbol{X}_i)^T), & i = 0, \\ 
			\hat{\mathbf{C}}([\hat{\boldsymbol{U}}_{i-1}, (\boldsymbol{X}_i)^T]), & i > 0, \\
		\end{array}
		\right.
    \end{aligned}
\label{eq:svd}
\end{equation}
where $[\cdot]$ denotes concatenate operation, $\hat{\mathbf{C}}(\boldsymbol{M})$ denotes k-rank column basis of $\boldsymbol{M}$, calculated by SVD followed k-rank approximation. As shown in Figure \ref{method}, for stable continual learning, CDM projects the original gradient of the task $i+1$ onto the orthogonal space of $span(\hat{\boldsymbol{U}}_i)$:
\begin{equation}
    \begin{aligned}
    \nabla_{\boldsymbol{W}_{i}}^{orth} = \nabla_{\boldsymbol{W}_{i}}-(\hat{\boldsymbol{U}}_i)(\hat{\boldsymbol{U}}_i)^T\nabla_{\boldsymbol{W}_{i}},
    \end{aligned}
\label{eq:gpm6}
\end{equation}
where $\boldsymbol{X}_{pre(i)}\nabla_{\boldsymbol{W}_{i}}^{orth} \approx \boldsymbol{0}$. The initial task is updated with the original gradient, and the incremental tasks are updated with the projected gradient. Thus, we can deduce that:
\begin{equation}
    \begin{aligned}
    % \hat{\boldsymbol{O}}_{pre(i)} & = \boldsymbol{X}_{pre(i)} \boldsymbol{W}_{i+1}  \\ & =\boldsymbol{X}_{pre(i)} \boldsymbol{W}_{i} - \eta \boldsymbol{X}_{pre(i)} \nabla_{\boldsymbol{W}_{i}}^{orth} \\ & \approx \boldsymbol{O}_{pre(i)},
     \hat{\boldsymbol{O}}_{pre(i)} =\boldsymbol{X}_{pre(i)} \boldsymbol{W}_{i} - \eta \boldsymbol{X}_{pre(i)} \nabla_{\boldsymbol{W}_{i}}^{orth} \approx \boldsymbol{O}_{pre(i)},
    \end{aligned}
\label{eq:gpm6}
\end{equation}
% \begin{equation}
%     \begin{aligned}
%     \hat{\boldsymbol{O}}_{init} = \boldsymbol{W}_{init} \boldsymbol{X}_{init} - \alpha \nabla_{new}^{orth} \boldsymbol{X}_{init} & \\ \approx \boldsymbol{W}_{init} \boldsymbol{X}_{init} = \boldsymbol{O}_{init}
%     \end{aligned}
% \label{eq:gpm6}
% \end{equation}
where the updated model on the new task does not perturb the output space of the previous task, thus ensuring the model retains the old knowledge.

\begin{algorithm}[t]
    \caption{SVD followed by k-rank approximation}
    \vspace{-10pt}
    \begin{minted}[fontsize=\footnotesize, framesep=0.1mm,]{python}
# M: Feature matrix, d × m
# th: Threshold
def S_and_K(M, th):
    # U: d × d, S: d, Vh: m × m
    U, S, Vh = torch.linalg.svd(M)
    total = (S ** 2).sum()
    ratio = (S ** 2) / total
    k = (torch.cumsum(ratio) < th).sum()
    U_hat = U[:, :k]
    return U_hat
    \end{minted}
    \vspace{-10pt}
    \label{algo1}
\end{algorithm}

\subsection{Iterative Singular Value Decomposition}
\label{isvd}
The traditional SVD method used in Eq (\ref{eq:svd}) needs the global calculation to calculate the significant representation, thus it requires enough memory to store intermediate features. However, the Markov-based denoising process greatly increases the size of intermediate features, according to our statistics, it requires almost 157 GB of memory to compute significant representation for only ten images (see Table \ref{memory}), which is an unaffordable memory cost. Therefore, we propose an iterative SVD (iSVD) method, which can used to calculate the significant representation of a large matrix and retain the original properties with a small memory overhead. Specifically, we denote the input matrix of Eq (\ref{eq:svd}) as $\boldsymbol{M} \in \mathbb{R}^{d \times \Lambda}$, where $d$ and $\Lambda$ denote the dimension and number of features and $\Lambda \gg d$. As illustrated in Figure \ref{method}, $\boldsymbol{M}$ is split into $\{\boldsymbol{M}_1, \boldsymbol{M}_2, \dots, \boldsymbol{M}_n\}$, where $\boldsymbol{M}_i \in \mathbb{R}^{d \times m}$, $m=\frac{\Lambda}{n}$. The significant representation $\hat{\boldsymbol{U}}_1 \in \mathbb{R}^{d \times k}$ of $\boldsymbol{M}_1$ is calculated by SVD followed by its k-rank approximation according to the threshold, $\gamma_{th}$:  
\begin{equation}
\centering
    \begin{aligned}
    \boldsymbol{M}_1 = \boldsymbol{U}_1 \boldsymbol{\Sigma}_1 \boldsymbol{V}_1^T, 
    \boldsymbol{U}_1  & \in \mathbb{R}^{d \times d}, 
    \boldsymbol{\Sigma}_1 \in \mathbb{R}^{d \times m}, \\
    % \boldsymbol{V} \in \mathbb{R}^{m \times m}, \\
    (\boldsymbol{M}_1)_{k} = \hat{\boldsymbol{U}}_1 \hat{\boldsymbol{\Sigma}}_1 \hat{\boldsymbol{V}}_1^T,
    \hat{\boldsymbol{U}}_1 & \in \mathbb{R}^{d \times k}, 
    \hat{\boldsymbol{\Sigma}}_1 \in \mathbb{R}^{k \times k}, \\
    \text{s.t.} \quad \left\|\left(\boldsymbol{M}_1\right)_{k}\right\|_F^2 & \geq \gamma_{th}\left\|\boldsymbol{M}_1\right\|_F^2 ,
    % \hat{\boldsymbol{V}} \in \mathbb{R}^{m \times k}, \\
    % \left\|\left(\boldsymbol{M}_1\right)_{k}\right\|_F^2 \geq \gamma_{th}\left\|\boldsymbol{M}_1\right\|_F^2 ,
    \end{aligned}
\label{eq:gpm4}
\end{equation}
where $ \left\|\cdot\right\|_F^2$ denotes $F$-norm, $\boldsymbol{U}_1=[u^1, u^2, \dots, u^d], u^i \in \mathbb{R}^{d}$, and the significant representation $\hat{\boldsymbol{U}}_1=[u^1, u^2, \dots, u^{k}]$, selected by the $k$ vectors with highest singular values in $\boldsymbol{U}_1$, the pytorch implementation is shown in Algorithm \ref{algo1}.
Then, $\hat{\boldsymbol{U}}_1$ is concatenated with $\boldsymbol{M}_2$ to continue S\&K operation iteratively. Finally, the overall significant representation $\hat{\boldsymbol{U}}_n$ is obtained to calculate the orthogonal gradient of the new task. Next, we will prove that $\hat{\boldsymbol{U}}_n$ obtained through iSVD is a significant representation of $\boldsymbol{M}$ and give the memory saving rating.

\begin{lemma}
    \label{lemma}
    linear transitivity: Given matrices $\boldsymbol{S}_1$, $\boldsymbol{S}_2$, and $\boldsymbol{S}_3$, if $\boldsymbol{S}_1$ can be linearly expressed by $\boldsymbol{S}_2$, and $\boldsymbol{S}_2$ can be linearly expressed by $\boldsymbol{S}_3$, then $\boldsymbol{S}_1$ can be linearly expressed by $\boldsymbol{S}_3$. 
\end{lemma}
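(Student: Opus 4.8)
The plan is to turn the informal phrase ``can be linearly expressed by'' into an algebraic identity and then simply compose. First I would fix the convention, consistent with the column-basis usage around Eq.~(\ref{eq:svd}): to say that $\boldsymbol{S}_1$ can be linearly expressed by $\boldsymbol{S}_2$ means every column of $\boldsymbol{S}_1$ is a linear combination of the columns of $\boldsymbol{S}_2$, i.e.\ there is a coefficient matrix $\boldsymbol{C}_{12}$ of compatible shape with $\boldsymbol{S}_1 = \boldsymbol{S}_2\boldsymbol{C}_{12}$; equivalently $span(\boldsymbol{S}_1)\subseteq span(\boldsymbol{S}_2)$, where $span(\cdot)$ denotes the column space. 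Stating this explicitly avoids any ambiguity between ``combining columns'' (right multiplication) and ``combining rows'' (left multiplication), and it is the version actually used later when reasoning about $\hat{\boldsymbol{U}}_i$.

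With this in hand, the two hypotheses supply coefficient matrices $\boldsymbol{C}_{12}$ and $\boldsymbol{C}_{23}$ such that $\boldsymbol{S}_1 = \boldsymbol{S}_2\boldsymbol{C}_{12}$ and $\boldsymbol{S}_2 = \boldsymbol{S}_3\boldsymbol{C}_{23}$. Substituting the second identity into the first and invoking associativity of matrix multiplication gives
\begin{equation}
    \boldsymbol{S}_1 = (\boldsymbol{S}_3\boldsymbol{C}_{23})\boldsymbol{C}_{12} = \boldsymbol{S}_3(\boldsymbol{C}_{23}\boldsymbol{C}_{12}),
\end{equation}
so $\boldsymbol{C}_{13}:=\boldsymbol{C}_{23}\boldsymbol{C}_{12}$ witnesses that $\boldsymbol{S}_1$ is linearly expressed by $\boldsymbol{S}_3$. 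Equivalently, in the span formulation this is nothing more than transitivity of set inclusion, $span(\boldsymbol{S}_1)\subseteq span(\boldsymbol{S}_2)\subseteq span(\boldsymbol{S}_3)$.

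Honestly, there is no genuinely hard step in the proof itself; the only things to be careful about are that the shapes of $\boldsymbol{C}_{23}$ and $\boldsymbol{C}_{12}$ match for the product $\boldsymbol{C}_{23}\boldsymbol{C}_{12}$ — which holds automatically since the inner dimension is the column count of $\boldsymbol{S}_2$ — and that the column-versus-row convention is applied consistently. The part worth presenting carefully is not the lemma but its use: taking $\boldsymbol{S}_1$ to be the columns of $\boldsymbol{M}$ retained by a global $k$-rank SVD, $\boldsymbol{S}_2$ an intermediate concatenation $[\hat{\boldsymbol{U}}_{j-1},\boldsymbol{M}_j]$, and $\boldsymbol{S}_3$ the next iterate $\hat{\boldsymbol{U}}_j$, Lemma~\ref{lemma} lets one chain the ``spans the significant subspace'' relation through all $n$ iterations, which is exactly what is needed to conclude that $\hat{\boldsymbol{U}}_n$ produced by iSVD is a significant representation of $\boldsymbol{M}$.
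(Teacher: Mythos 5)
Your proof of the lemma itself is correct: once ``linearly expressed by'' is fixed as a column convention, the hypotheses give coefficient matrices with $\boldsymbol{S}_1=\boldsymbol{S}_2\boldsymbol{C}_{12}$ and $\boldsymbol{S}_2=\boldsymbol{S}_3\boldsymbol{C}_{23}$, and associativity yields $\boldsymbol{S}_1=\boldsymbol{S}_3(\boldsymbol{C}_{23}\boldsymbol{C}_{12})$, i.e.\ transitivity of column-space inclusion. The difference from the paper is one of emphasis rather than correctness: the proof environment attached to Lemma~\ref{lemma} never actually argues transitivity at all --- it takes it as given and spends its effort on the downstream claim that the iSVD output $\hat{\boldsymbol{U}}_n$ is a significant representation of $\boldsymbol{M}$. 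There the substance is the explicit SVD block decompositions, writing $\boldsymbol{M}_1=[\hat{\boldsymbol{U}}_1\,\boldsymbol{U}_1^0]\,\mathrm{diag}(\hat{\boldsymbol{\Sigma}}_1,0)\,\boldsymbol{V}_1^T$ so that, under the stated idealization that the $k$-rank truncation retains every nonzero singular value, $\hat{\boldsymbol{U}}_1$ is an orthonormal basis of the column space of $\boldsymbol{M}_1$, and similarly $\hat{\boldsymbol{U}}_2$ for $[\hat{\boldsymbol{U}}_1,\boldsymbol{M}_2]$; these decompositions are precisely what supply the premises of the lemma at each iteration before it is invoked. Your closing sketch of the application matches this chaining, but two points would need care if you expanded it: first, the per-step fact that $\hat{\boldsymbol{U}}_j$ linearly expresses $[\hat{\boldsymbol{U}}_{j-1},\boldsymbol{M}_j]$ is exactly where the ``ignore approximation loss'' assumption enters, and with a genuine threshold $\gamma_{th}<1$ the relation (and hence the conclusion about $\hat{\boldsymbol{U}}_n$) holds only approximately; second, the paper's induction expresses each raw block $\boldsymbol{M}_j$, and hence all of $\boldsymbol{M}=[\boldsymbol{M}_1,\dots,\boldsymbol{M}_n]$, by $\hat{\boldsymbol{U}}_n$, rather than taking $\boldsymbol{S}_1$ to be the columns retained by a global $k$-rank SVD as in your sketch. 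In short, your argument supplies the rigorous proof of the stated lemma that the paper omits, while the paper's proof supplies the verification of the hypotheses needed to apply it iteratively; both are needed for the full iSVD claim.
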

\begin{proof}
    For ease of understanding, we ignore the approximate information loss, assuming that the k-rank approximation picks all vectors with nonzero singular values:
    \begin{equation}
	\begin{aligned}
            \boldsymbol{M}_1  = \boldsymbol{U}_1 \boldsymbol{\Sigma}_1 \boldsymbol{V}^T 
             =[\hat{\boldsymbol{U}}_1 \boldsymbol{U}_1^0] 
            [
            \begin{array}{cc}
            \hat{\boldsymbol{\Sigma}}_1 & 0 \\
            0 & 0
            \end{array}
            ]\boldsymbol{V}_1^T,
	\end{aligned}
    \end{equation}
    where $\hat{\boldsymbol{U}}_1 \in \mathbb{R}^{d \times k_1}$, $\hat{\boldsymbol{\Sigma}_1} \in \mathbb{R}^{k_1 \times k_1}$, $rank(\boldsymbol{M}_1)=k_1$, and $\hat{\boldsymbol{U}}_1$ is an orthonormal basis of $\boldsymbol{M}_1$, thus $\boldsymbol{M}_1$ can be linearly expressed by $\hat{\boldsymbol{U}}_1$. 
    \begin{equation}
	\begin{aligned}
            [\hat{\boldsymbol{U}}_1 \boldsymbol{M}_2]  = \boldsymbol{U}_2 \boldsymbol{\Sigma}_2 \boldsymbol{V}_2^T 
             =[\hat{\boldsymbol{U}}_2 \boldsymbol{U}_2^0] 
            [
            \begin{array}{cc}
            \hat{\boldsymbol{\Sigma}}_2 & 0 \\
            0 & 0
            \end{array}
            ]\boldsymbol{V}_2^T,
	\end{aligned}
    \end{equation}
    where $\hat{\boldsymbol{U}}_2 \in \mathbb{R}^{d \times k_2}$, $\hat{\boldsymbol{\Sigma}_2} \in \mathbb{R}^{k_1 \times k_1}$, $rank([\hat{\boldsymbol{U}}_1; \boldsymbol{M}_2])=k_2$, similarly, $\hat{\boldsymbol{U}}_1$ and $\boldsymbol{M}_2$ can be linearly expressed by $\hat{\boldsymbol{U}}_2$. According to \textbf{Lemma \ref{lemma}}, $\boldsymbol{M}_1$ can also be linearly expressed by $\hat{\boldsymbol{U}}_2$. 
    
    By the same reasoning, it can be proved that $\boldsymbol{M}=[\boldsymbol{M}_1, \boldsymbol{M}_2, \dots, \boldsymbol{M}_n]$ can be linearly expressed by $\hat{\boldsymbol{U}}_n$, thus $\hat{\boldsymbol{U}}_n$ is a significant representation of $\boldsymbol{M}$.
\end{proof}

% \\[\baselineskip]
\noindent \textbf{Memory saving rate.} Theoretically, the memory saving rate of iSVD over SVD is about $\frac{n^2-1}{n^2}$ (see supplementary). Considering memory sharing and other factors, there are some changes in practice. Table \ref{memory} shows the actual memory usage of SVD and iSVD for computing the significant representation of ten images. When $n=10$ the memory is saved by $89.4\%$. In this paper, we set $n$ to be the image number of the previous task. Therefore, the memory consumption of iSVD does not expand as the number of images increases.
% \begin{figure}[h!]
%   \centering
%    \includegraphics[width=0.95\linewidth]{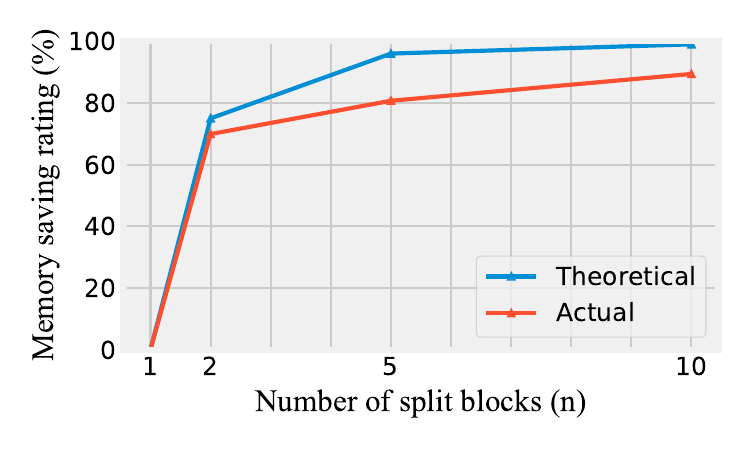}
%    \caption{The relationship between memory consumption and the number of split blocks $n$.}
%   \label{memory}
% \end{figure}

\begin{table}[h!]
\centering
\vspace{-5pt}
\scalebox{0.85}{
\setlength{\tabcolsep}{2.8mm}
\begin{tabular}{ccccc}
\toprule
 & \multicolumn{1}{c}{\textbf{SVD}} & \multicolumn{3}{c}{\textbf{iSVD}} \\ \midrule
{$n$} & 1 & 2 & 5 & 10 \\
{Memory} & 157.3GB & 69.9GB & 30.3GB & \textbf{16.7GB} \\
{Saving rate} & 0\% & 55.6\% & 80.7\% & 89.4\% \\ \bottomrule
\end{tabular}}
\vspace{-5pt}
\caption{Memory consumption and memory saving rate of iSVD with different number of split blocks $n$.}
  \label{memory}
  \vspace{-10pt}
\end{table}
% The memory overhead of SVD is $d\Lambda + d^2+\Lambda^2+\min(d, \Lambda)$, while iSVD uses a memory overhead of $d(m+k) + d^2 + (m+k)^2 + \min(d, m+k)$. It is known that $\Lambda \gg d$, $d > k$ and $\Lambda=mn$, thus, the memory saving rate of iSVD is about:
% \begin{equation}
%     \begin{aligned}
%        & \frac{d\Lambda + d^2+\Lambda^2+d - [d(m+k) + d^2 + (m+k)^2 + d]}{d\Lambda + d^2+\Lambda^2+d} \\
%       = & \frac{d\Lambda +\Lambda^2 - d(m+k) - (m+k)^2 }{\Lambda^2}/\frac{d\Lambda + d^2+\Lambda^2+d}{\Lambda^2} \\
%       \approx & 1-\frac{m^2+2mk+k^2}{m^2n^2} \approx \frac{n^2-1}{n^2}.
%     \end{aligned}
% \label{eq:amn}
% \end{equation} 

% \\[\baselineskip]
% \noindent \textbf{Time complexity.} The time complexity of pure SVD is $\max(d, \Lambda)^3=\Lambda^3$, so the time complexity of iSVD can be calculated as:
% \begin{equation}
%     \begin{aligned}
%       O(\max(d, m)^3 + \max(d, m+k)^3 \times (n -1))
%     \end{aligned}
% \label{eq:amn}
% \end{equation} 

% The non-local encoder is used to mask out abnormal regions through the perception of non-local features. As shown in Figure \ref{method}, t
\subsection{Anomaly-Masked Network}
\label{amn}
The image-to-image diffusion models in anomaly detection face the problems of "identical shortcuts" and ``overfitting''. To address the above two issues, we propose the anomaly-masked conditional network (AMN) to control anomaly-to-normal generation. As shown in Figure \ref{method}, AMN comprises local and non-local encoders. The local encoder is CNNs with SiLU activation, which is used to encode the local features. Given an input image  $x\in \mathbb{R}^{H\times W \times 3}$, $H $ and $ W$ denote its height and width, the image is initially encoded by local encoder $\bar{c}=E_{lo}(x)$, where $\bar{c} \in \mathbb{R}^{\frac{H}{f}\times \frac{W}{f} \times D}$, which downsamples by a factor of $f$ and the shape of $\bar{c}$ is consistent with the input shape of U-Net. The non-local encoder is composed of the transformer encoder layer and is used to mask out abnormal regions through the perception of non-local features. The self-attention module is replaced by neighbor-masked self-attention (NMSA) \cite{uniad}, which aims to enhance the role of non-local features in feature encoding. Concretely, the locally encoded feature map $\bar{c}$ is added with the sine positional embedding $e_p$ and then processed through the non-local encoder $c=E_{no}(\bar{c}+e_p)$, where $c \in \mathbb{R}^{\frac{H}{f}\times \frac{W}{f} \times D}$ is the reprocessed $\bar{c}$. Patch perturbation is introduced to make the non-local encoder mask out abnormal features and retain normal features. We perturb the original image $x$ to obtain abnormal images $\Tilde{x}$, both of which are input to AMN and constrained by anomaly-masked loss:
\begin{equation}
    \begin{aligned}
        \mathcal{L}_{amn} = \mathbb{E}_{x, \Tilde{x}}\left\| E_{no}(E_{lo}(\Tilde{x})+e_p)-E_{lo}(x)\right\|_2^2,
    \end{aligned}
\label{eq:amn}
\end{equation} 
which can mask out the anomaly region and retain normal region features. So that the denoising process pays more attention to the reconstruction of anomaly regions.

\section{Experiments}
\label{sec:experiments}

\noindent \textbf{Dataset.} In this paper, the datasets we use are MVTec \cite{MvTec} and VisA \cite{Visa}. Both datasets contain multiple classes. In particular, MVTec contains 15 classes with $700^2-900^2$ pixels per image, and VisA contains 12 classes with roughly $1.5\textup{K}\times1\textup{K}$ pixels per image. The training set contains only normal images, while the test set contains normal images and anomaly images with image-level and pixel-level annotations. These two datasets contain intricate instances that are widely acknowledged for their real-world applicability.

\noindent \textbf{Protocol.} Referring to IUF\cite{iuf}, we conducted comparative experiments between our approach and the other methods under different continual anomaly detection settings. Specifically, in MVTec-AD \cite{MvTec}, we performed the subsequent four settings: 14 -- 1 with 1 step, 10 -- 5 with 1 step, 3 $\times$ 5 with 5 steps, and 10 -- 1 $\times$ 5 with 5 steps. In VisA \cite{Visa}, we established three settings: 11 -- 1 with 1 step, 8 -- 4 with 1 step, and 8 -- 1 $\times$ 4 with 4 steps, . In addition, we completed the cross-dataset continuous anomaly detection, including the MVTec to VisA increment and the VisA to MVTec increment.

\noindent \textbf{Evaluation metrics.} We follow the literature \cite{MvTec} in reporting the Area Under the Receiver Operation Characteristic (AUROC) for both image-level and pixel-level anomaly detection. To measure the performance of the model in continuous learning, referring to DNE \cite{dne}, we calculated the average AUROC (A-AUROC) and the forgetting measure (FM) for $N$ continual steps.
% \begin{equation}
%     \mathrm{A\textbf{-}AUROC}=\left\{\begin{array}{l}
%     \frac{1}{N} \sum_{i=1}^{N-1} A_{N, i}^{\mathrm{pix}} \\
%     \frac{1}{N} \sum_{i=1}^{N-1} A_{N, i}^{\mathrm{img}}
%     \end{array},\right.
% \end{equation}
% \vspace{-8pt}
% \begin{equation}
%     \mathrm{FM}=\left\{\begin{array}{l}
%     \frac{1}{N-1} \sum_{i=1}^{N-1} \max _{b \in\{1, \cdots, N-1\}}\left(A_{b, i}^{\mathrm{pix}}-A_{N, i}^{\mathrm{pix}}\right) \\
%     \frac{1}{N-1} \sum_{i=1}^{N-1} \max _{b \in\{1, \cdots, N-1\}}\left(A_{b, i}^{\mathrm{img}}-A_{N, i}^{\mathrm{img}}\right)
%     \end{array}\right.
% \end{equation}

\noindent \textbf{Implementation details.} CDAD is based on the pre-trained Stable Diffusion with a pre-trained VAE, and the VAE does not need further fine-tuning. The input images have all resized of 256 × 256, and the input to the text encoder is a space character. For the Anomaly-Masked Network, the depth of both the local encoder and the non-local encoder is $8$.  In the continuous learning process, the initial task is trained for 500 epochs, and each incremental task is trained for 100 epochs. During the inference phase, the Denoising Diffusion Implicit Model \cite{ddim} (DDIM) is employed as the sampler with a default of 10 steps, and the pre-trained CNN we used to extract features is a ResNet50 pre-trained on ImageNet.
% The model is trained on a single NVIDIA RTX A6000 with a batch size of 12.
% Please add the following required packages to your document preamble:
% \usepackage{multirow}
\begin{table*}[]
\centering
\scalebox{0.80}{
\setlength{\tabcolsep}{2mm}
\begin{tabular}{c|cc|cc|cc|cc}
\toprule
\multicolumn{1}{c|}{\multirow{2}{*}{Method}}             & \multicolumn{2}{c|}{\textbf{14 -- 1 with 1 Step}} & \multicolumn{2}{c|}{\textbf{10 -- 5 with 1 Step}} & \multicolumn{2}{c|}{\textbf{3 × 5 with 5 Steps}} & \multicolumn{2}{c}{\textbf{10 -- 1 × 5 with 5 Steps}} \\ \cmidrule{2-9} 
                              & \textbf{A-AUROC ($\uparrow$)}            & \textbf{FM ($\downarrow$)}                 & \textbf{A-AUROC ($\uparrow$)}              & \textbf{FM ($\downarrow$)}                 & \textbf{A-AUROC ($\uparrow$)}              & \textbf{FM ($\downarrow$)}                 & \textbf{A-AUROC ($\uparrow$)}                & \textbf{FM ($\downarrow$)}                   \\ \hline
UniAD\cite{uniad}                 & 85.7 / 89.6        & 18.3 / 13.3          & 86.7 / 91.5          & 14.9 / 10.6          & 81.3 / 88.7          & 7.4 / 10.6           & 76.6 / 82.3            & 21.1 / 17.3            \\
UniAD  + EWC\cite{ewc}  & 92.8 / 95.4        & 4.1 / 1.9            & 90.5 / 93.6          & 7.3 / 4.2            & 79.6 / 89.0          & 9.5 / 10.1           & 89.6 / 93.8            & 5.4 / 3.6              \\
UniAD  + SI\cite{si}   & 85.7 / 89.5        & 18.4 / 13.4          & 84.1 / 88.3          & 20.2 / 17.0          & 81.9 / 88.5          & 7.0 / 10.8           & 77.9 / 82.0            & 19.5 / 17.7            \\
UniAD  + MAS\cite{mas}   & 85.8 / 89.6        & 18.1 / 13.3          & 86.8 / 91.0          & 14.9 / 11.6          & 81.5 / 89.0          & 7.2 / 10.2           & 77.9 / 82.0            & 19.5 / 17.7            \\
UniAD  + LVT\cite{lvt} & 80.4 / 86.0        & 29.1 / 20.6          & 87.1 / 90.6          & 14.1 / 12.3          & 80.4 / 88.6          & 8.6 / 10.6           & 78.2 / 88.3            & 19.1 / 16.1            \\

UCAD$*$\cite{ucad}                          & {93.8} / 95.7        & 1.8 / 0.3            & 88.7 / 93.1          & 5.2 / \underline{3.7}            & \underline{84.8} / {90.1}          & {10.3} / 9.2           & {91.2} / {94.0}            & {6.3} / {3.0}              \\ 

IUF\cite{iuf}                          & \underline{96.0} / 96.3        & \underline{1.0} / 0.6            & 92.2 / \underline{94.4}          & 9.3 / 4.3            & {84.2} / \underline{91.1}          & \underline{10.0} / 8.4           & \underline{94.2} / \underline{95.1}            & \underline{3.2} / \underline{1.0}              \\ 

\midrule
DiAD$\dag$\cite{diad}                          & 93.5 / \underline{96.5}        & 1.7 / \underline{0.1}           & 91.9 / {93.9}          & 3.2 / 3.9           & {80.5} / 89.0          & {11.8} / \underline{7.2}           & 83.3 / 92.7            & 12.2 / 3.6             \\
ControlNet$\dag$\cite{controlnet}                    & 92.6 / 96.4        & 3.6 / \underline{0.1}            & \underline{92.7} / 93.8          & \underline{2.9} / 4.0            & 79.2 / {89.6}          & 13.9 / 7.3           & 82.6 / 91.8            & 13.5 / 4.5             \\ \midrule 
\rowcolor[RGB]{230,230,230}
\textbf{CDAD}$\dag$                         & \textbf{96.4} / \textbf{96.6}        & \textbf{-0.57} / \textbf{-0.04}        & \textbf{94.2} / \textbf{95.3}          & \textbf{2.05} / \textbf{2.40}          & \textbf{89.1} / \textbf{92.2}          & \textbf{3.69} / \textbf{4.65}          & \textbf{94.9} / \textbf{95.7}            & \textbf{1.01} / \textbf{0.74}            \\ \bottomrule
\end{tabular}
}
% \vspace{-5pt}
\caption{Image-level/pixel-level results of our method on MVTec under 4 continual anomaly detection settings. The best and second-best results are marked in \textbf{blod} and \underline{underline}. $\dag$ indicates DM-based methods, and $*$ indicates memory-limited.}
\vspace{-5pt}
\label{mvtec}
\end{table*}

% Please add the following required packages to your document preamble:
% \usepackage{multirow}
% \usepackage[table,xcdraw]{xcolor}
% Beamer presentation requires \usepackage{colortbl} instead of \usepackage[table,xcdraw]{xcolor}
\begin{table*}[]
\centering
\scalebox{0.75}{
\setlength{\tabcolsep}{6mm}
\begin{tabular}{c|cc|cc|cc}
\toprule
                      \multicolumn{1}{c|}{\multirow{2}{*}{Method}}      & \multicolumn{2}{c|}{\textbf{11 -- 1 with 1 Step}} & \multicolumn{2}{c|}{\textbf{8 -- 4 with 1 Step}} & \multicolumn{2}{c}{\textbf{8 -- 1 × 4 with 4 Steps}} \\ \cmidrule{2-7} 
\multirow{-2}{*}{\textbf{}} & \textbf{A-AUROC ($\uparrow$)}                 & \textbf{FM ($\downarrow$)}                     & \textbf{A-AUROC ($\uparrow$)}                 & \textbf{FM ($\downarrow$)}                      & \textbf{A-AUROC ($\uparrow$)}                    & \textbf{FM ($\downarrow$)}                       \\ \midrule
UniAD\cite{uniad}                       & 75.0 / 92.1           & 22.4 / 11.4            & 78.1 / 94.0           & 14.7 / 8.4              & 72.2 / 90.8              & 16.6 / 9.2               \\
UniAD + EWC\cite{ewc}                 & 78.7 / 95.4           & 14.9 / 4.8             & 80.5 / 95.4           & 10.0 / 5.3              & 72.3 / 92.3              & 16.5 / 7.3               \\
UniAD + SI\cite{si}                  & 78.1 / 92.0           & 16.9 / 11.5            & 69.8 / 88.5           & 9.2 / 8.3               & 69.8 / 88.5              & 19.8 / 12.0              \\
UniAD + MAS\cite{mas}                 & 75.4 / 91.8           & 21.5 / 11.9            & 72.1 / 90.6           & 14.1 / 8.4              & 72.1 / 90.6              & 16.7 / 9.4               \\
UniAD + LVT\cite{lvt}                 & 77.5 / 92.3           & 17.3 / 10.9            & 70.8 / 91.4           & 13.4 / 8.1              & 70.8 / 94.4              & 18.3 / 8.4               \\
UCAD$*$\cite{ucad}                         & 85.9 / 94.5          & 2.7 / 0.6              & 79.9 / 94.2           & 8.9 / 4.8               & 78.8 / 93.5              & {10.4} / \underline{5.4}                \\

IUF\cite{iuf}                         & \underline{87.3} / \textbf{97.6}           & \underline{2.4} / 1.8              & \underline{80.1} / \underline{95.4}           & 9.8 / 6.8               & \underline{79.8} / \underline{95.0}              & \underline{9.8} / 6.8                \\

\midrule
DiAD$\dag$\cite{diad}                        & 70.2 / 91.1           & 5.6 / 2.2              & 67.4 / 91.7           & 7.7 / 1.6               & 65.0 / 84.9              & 11.2 / 8.8               \\
ControlNet$\dag$\cite{controlnet}                  & 82.4 / 92.3           & 2.6 / \underline{0.2}              & 76.3 / 91.8           & \underline{5.8} / \underline{0.4}               & 70.0 / 88.7              & 13.8 / {5.8}               \\ \midrule
\rowcolor[RGB]{230,230,230}
\textbf{CDAD}$\dag$                        & \textbf{88.3} / \underline{97.2}           & \textbf{-1.28} / \textbf{-0.06}          & \textbf{85.3} / \textbf{97.1}           & \textbf{1.39} / \textbf{0.29}           & \textbf{83.4} / \textbf{95.8}              & \textbf{3.79} / \textbf{1.49}              \\ \bottomrule
\end{tabular}}
% \vspace{-5pt}
\caption{Image-level/pixel-level results of our method on VisA under 3 continual anomaly detection settings. The best and second-best results are marked in \textbf{blod} and \underline{underline}. $\dag$ indicates DM-based methods, and $*$ indicates memory-limited.}
\label{visa}
\vspace{-10pt}
\end{table*}

\begin{table}[] 
\centering

\scalebox{0.8}{
\setlength{\tabcolsep}{0.6mm}
\begin{tabular}{c|cc|cc}
\toprule
\multicolumn{1}{c|}{\multirow{2}{*}{Method}}               & \multicolumn{2}{c|}{\textbf{MVTec--VisA}} & \multicolumn{2}{c}{\textbf{VisA--MVTec}} \\ \cmidrule{2-5} 
\multicolumn{1}{l|}{\multirow{-2}{*}{}} & \textbf{A-AUROC ($\uparrow$)}              & \textbf{FM ($\downarrow$)}                   & \textbf{A-AUROC ($\uparrow$)}             & \textbf{FM ($\downarrow$)}                   \\ \midrule
IUF                                     & 82.4 / 92.5            & {11.09} / {4.88}            & 74.3 / 89.7           & 16.74 / \underline{5.54}           \\
DiAD                                    & \underline{85.5} / \underline{93.0}            & \underline{10.24} / \underline{4.15}           & 78.1 / \underline{92.2}           & 12.54 / 5.89           \\
ControlNet                              & 85.3 / 91.7            & 10.35 / 4.73           & \underline{78.9} / 86.2           & \underline{12.07} / 10.68          \\ \midrule

\rowcolor[RGB]{230,230,230}
\textbf{CDAD}                                    & \textbf{90.0} / \textbf{94.9}            & \textbf{4.71} /  \textbf{1.78}            & \textbf{84.2} /  \textbf{94.1}           & \textbf{6.90} /  \textbf{3.45 }            \\ \bottomrule
\end{tabular}}
% \vspace{-5pt}
\caption{Image-level/pixel-level results on cross-dataset continual anomaly detection.}
\label{cross-data}
\vspace{-15pt}
\end{table}
%-------------------------------------------------------------------------
\subsection{Comparison Results}

The image/pixel-level experimental results of CDAD and current methods on continual anomaly detection are recorded in Table \ref {mvtec} \ref{visa}, where UniAD\cite{uniad} and DiAD\cite{diad} are multi-class anomaly detection methods, EWC\cite{ewc}, SI\cite{si}, MAS\cite{mas} and LVT\cite{lvt} are traditional continual learning method. IUF\cite{iuf} and UCAD\cite{ucad} are incremental frameworks for continual anomaly detection. We also replicate the results of DiAD\cite{diad} and ControlNet\cite{controlnet} on continual anomaly detection. As illustrated in Table \ref{mvtec} \ref{visa}, after incorporating traditional continuous learning methods, the results of UniAD showed some improvement but were still below the SOTA level, the DM-based methods achieve good results when the number of steps is small, while deteriorate and suffer ``catastrophic forgetting'' when the number of steps is larger. Compared with the above methods, IUF has achieved better results overall, but in some tasks, such as setting 3 on MVTec and setting 6 on visa, it still suffers from forgetting problems. Regarding our method, CDAD achieved all the best results on MVTec, especially on setting 3, which increased by $4.3 / 1.1$ compared to the second place, and the forgetting rate was also significantly reduced by $6.31 / 3.75$. On the VisA dataset, in addition to the pixel level result of setting 5, CDAD achieves the best results on other settings and improves $5.2 / 1.7$ and $3.6 / 0.8$ compared to the SOTA level of setting 6 and setting 7, respectively. The recorded results indicate that CDAD exhibits excellent anomaly detection performance and a significant anti-forgetting capability. Notably, in MVTec setting 1 and VisA setting 5, CDAD not only mitigates the issue of forgetting but also demonstrates enhanced performance on previous tasks, which suggests that CDAD can improve the overall generalization of the model with a limited number of incremental tasks.

\subsection{Cross-Dataset Continual Learning}

Table \ref{cross-data} shows the results of cross-dataset continual anomaly detection, MVTec--VisA represents base training on MVTec followed by incremental training on VisA. Due to the memory efficiency, CDAD enables continual learning on large-scale datasets. As recorded, CDAD outperforms the second-best method by $4.5 / 1.9$ and $5.3 / 1.9$ on two cross-dataset settings, respectively. In addition, CDAD also achieves the optimal cross-dataset forgetting rate, which indicates its excellent stability for large-scale incremental training.

\begin{figure*}[t]
  \centering
   \includegraphics[width=1.0\linewidth]{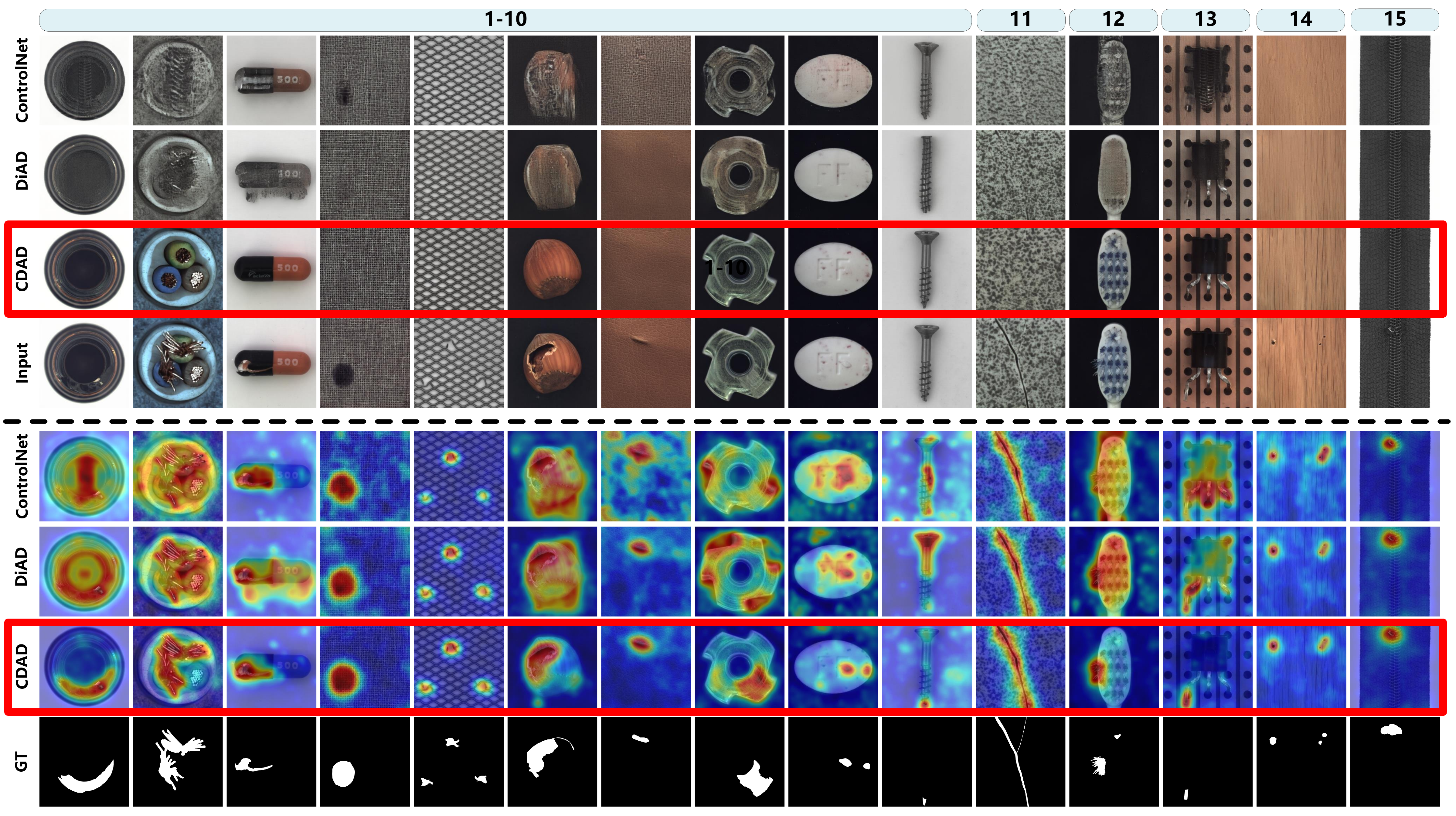}
    \vspace{-10pt}
   \caption{Qualitative comparison results under setting 4 of MVTec, the numbers represent continual training steps. The \textbf{top} is the reconstruction result of the anomaly image, and the \textbf{bottom} is the heat map of the anomaly location.}
   \vspace{-10pt}
   \label{qua}
\end{figure*}

\subsection{Visualization Results}
Figure \ref {qua} shows the qualitative results. It can be seen that on the last task, all methods can show excellent anomaly reconstruction and anomaly location effects, however, ControlNet \cite{controlnet} and DiAD \cite{diad} have serious illusion on the reconstruction results of the previous task, and the results of anomaly localization also collapse, especially on the object class. Our method shows strong stability in continuous learning, which shows excellent anomaly reconstruction and localization effects not only in the latest task but also in previous tasks. This indicates that our method effectively mitigates the hallucination problem caused by continual learning in diffusion models.
 
\subsection{Ablation Study}
We verify the impact of different modules on the overall performance of CDAD under setting 4 on MVTec \cite{MvTec}. These include the continual diffusion model (CDM) and anomaly-masked network (AMN). To better verify the effectiveness of our method, we use ControlNet \cite{controlnet} as the baseline for comparison. Results of the ablation study are recorded in Table \ref {ab}.

\noindent \textbf{Continual Diffusion Model.}  The continual diffusion model (CDM) is proposed to better retain the model's knowledge of previous tasks during continual learning, thereby alleviating the problem of forgetting. As shown in Table \ref{ab}, after removing CDM, the performance of the model in the new task gradually decreases with the increase of steps, and the final forgetting rate also increases by $4.35/1.81$. However, compared with ControlNet, the forgetting rate of the model without CMD (only use AMN) decreased by $8.1/1.9$, which is a significant improvement.

\noindent \textbf{Anomaly-Masked Network.} The anomaly-masked network (AMN) is proposed as a condition mechanism for the denoising process. In image-to-image tasks, the anomalous region in the input image is masked to retain the normal region, allowing the diffusion model to focus more effectively on reconstructing the abnormal area. As illustrated in Table \ref{ab}, without AMN, the model's results have a $1.1/1.8$ decrease on the initial step and a $0.6/1.7$ decrease after 5 steps of continuous learning. and finally. Interestingly, the use of AMN reduces the forgetting rate of the Model by $0.68/0.81$, suggesting that AMN not only enhances the performance of diffusion models in anomaly detection but also better combats forgetting.

\begin{table*}[]
\centering
\scalebox{0.8}{

\setlength{\tabcolsep}{6mm}
\begin{tabular}{ccccccc|c}
\toprule
         Step  & 1-10          & 11          & 12          & 13          & 14          & 15          & FM           \\ \midrule
ControlNet & \textbf{95.8} / \underline{97.8} & 92.4 / 95.8 & 93.2 / 96.2 & 87.4 / 91.0 & 86.9 / 91.2 & 82.6 / 91.8 & 13.46 / 4.46 \\
w/o AMN    & \underline{94.7} / 96.3 & \underline{94.9} / 96.3 & \underline{94.7} / 96.3 & \underline{93.8} / \underline{94.0} & \underline{94.0} / \underline{93.6} & \underline{94.3} / \underline{94.0} & \underline{1.69} / \underline{1.55}  \\
w/o CDM    & \textbf{95.8} / \textbf{98.1} & 93.2 / \underline{96.6} & 93.1 / \underline{96.4} & 87.9 / 92.4 & 88.3 / 91.4 & 90.6 / 93.5 & 5.36 / 2.55  \\ \midrule
CDAD       & \textbf{95.8} / \textbf{98.1} & \textbf{95.8} / \textbf{97.3} & \textbf{95.3} / \textbf{97.4} & \textbf{94.0} / \textbf{95.5}   & \textbf{94.3} / \textbf{95.1} & \textbf{94.9} / \textbf{ 95.7} & \textbf{1.01} / \textbf{0.74}  \\ \bottomrule
\end{tabular}}
\vspace{-5pt}
\caption{Ablation study results (image-level/pixel-level A-AUROC ($\uparrow$) and FM ($\downarrow$) on setting 4 of MVTec.}
\vspace{-10pt}
\label{ab}
\end{table*}

\begin{figure}[h]
  \centering
  \begin{subfigure}{0.27\linewidth}
    \includegraphics[width=1\linewidth]{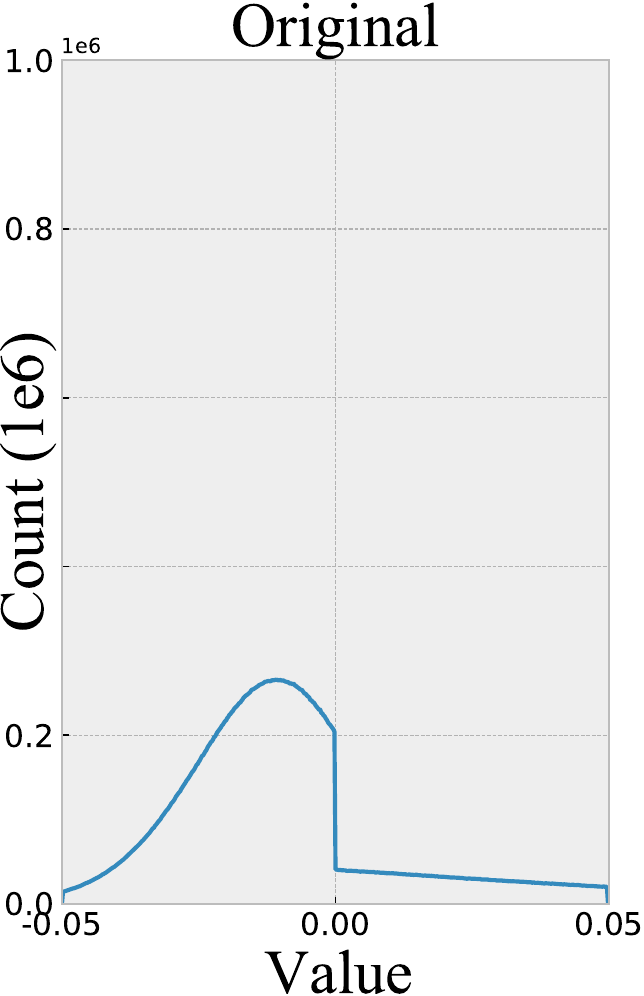}
    
    % \caption{MVTec (1-shot)}
    % \label{fig:short-a}
  \end{subfigure} \quad
    \begin{subfigure}{0.27\linewidth}
    \includegraphics[width=1\linewidth]{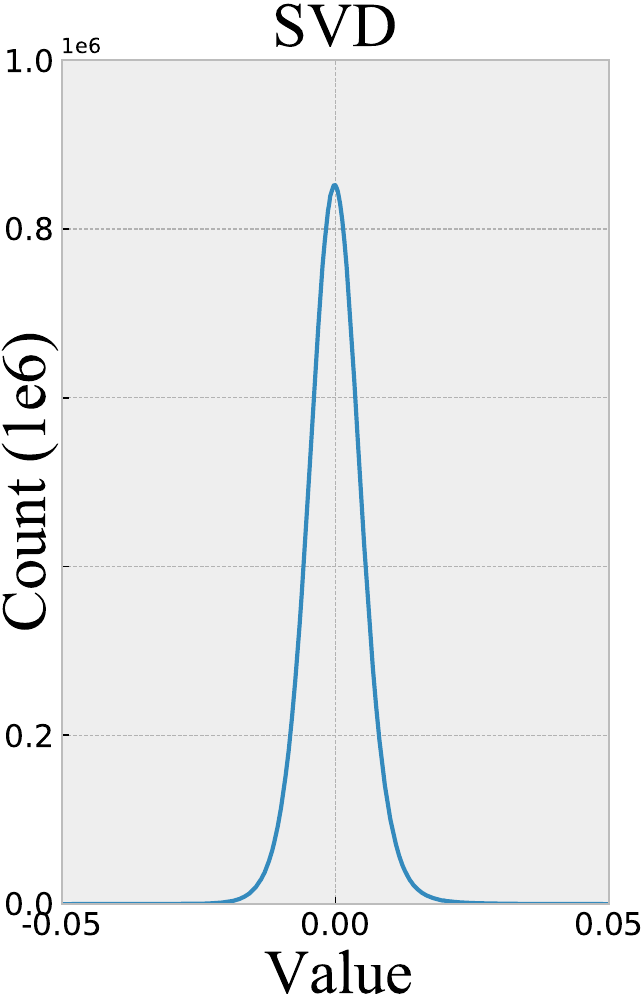}
    % \caption{MVTec (1-shot)}
    % \label{fig:short-a}
  \end{subfigure} \quad
  \begin{subfigure}{0.27\linewidth}
    \includegraphics[width=1\linewidth]{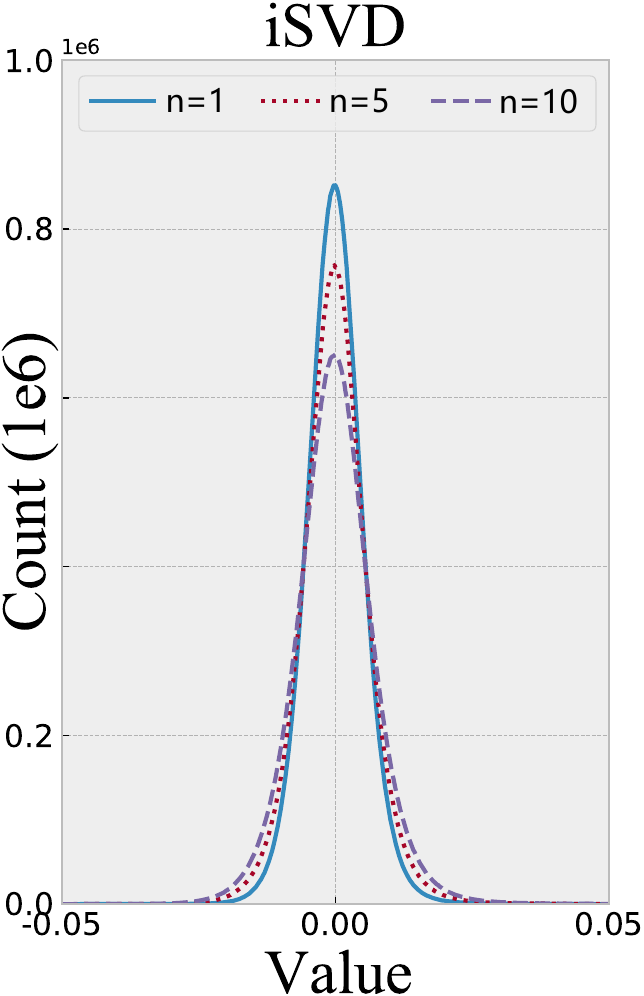}
    % \caption{VisA (1-shot)}
    % \label{fig:short-b}
  \end{subfigure}
  \vspace{-5pt}
  \caption{Value distribution of original matrix. Value distribution after projected by SVD and iSVD. $n$ is the number of split blocks.}
  \vspace{-15pt}
  \label{ve}
\end{figure}

\subsection{Validation Experiments of iSVD}
% Theoretically, the elements of the matrix converge to $0.0$ after projection.
We complete the experiments to verify the effectiveness of iSVD. Specifically, we expand an intermediate feature map of U-Net into a matrix, using SVD and iSVD to project this matrix into its orthogonal space. Figure \ref {ve} illustrates the value distribution of the matrix before and after projection, the matrix size is $1280 \times 50000$ and the threshold for k-rank approximate is $0.98$. It can be seen that the distribution of features is relatively uniform before projection, and after SVD projection, the distribution of values is concentrated around $0$. iSVD shows almost the same performance as SVD, which verifies that the significant representation calculated by iSVD is effective. In addition, we also find that with the increase of the number of split blocks, the effect of iSVD becomes worse, this is because the k-rank approximate is performed at each iteration, which loses part of the information. However, continuous learning can not project gradients 100\% into the orthogonal space of the previous task, which will harm the model's plasticity to new tasks, so the information loss of iSVD is acceptable.

\begin{figure}
  \centering
   \includegraphics[width=0.85\linewidth]{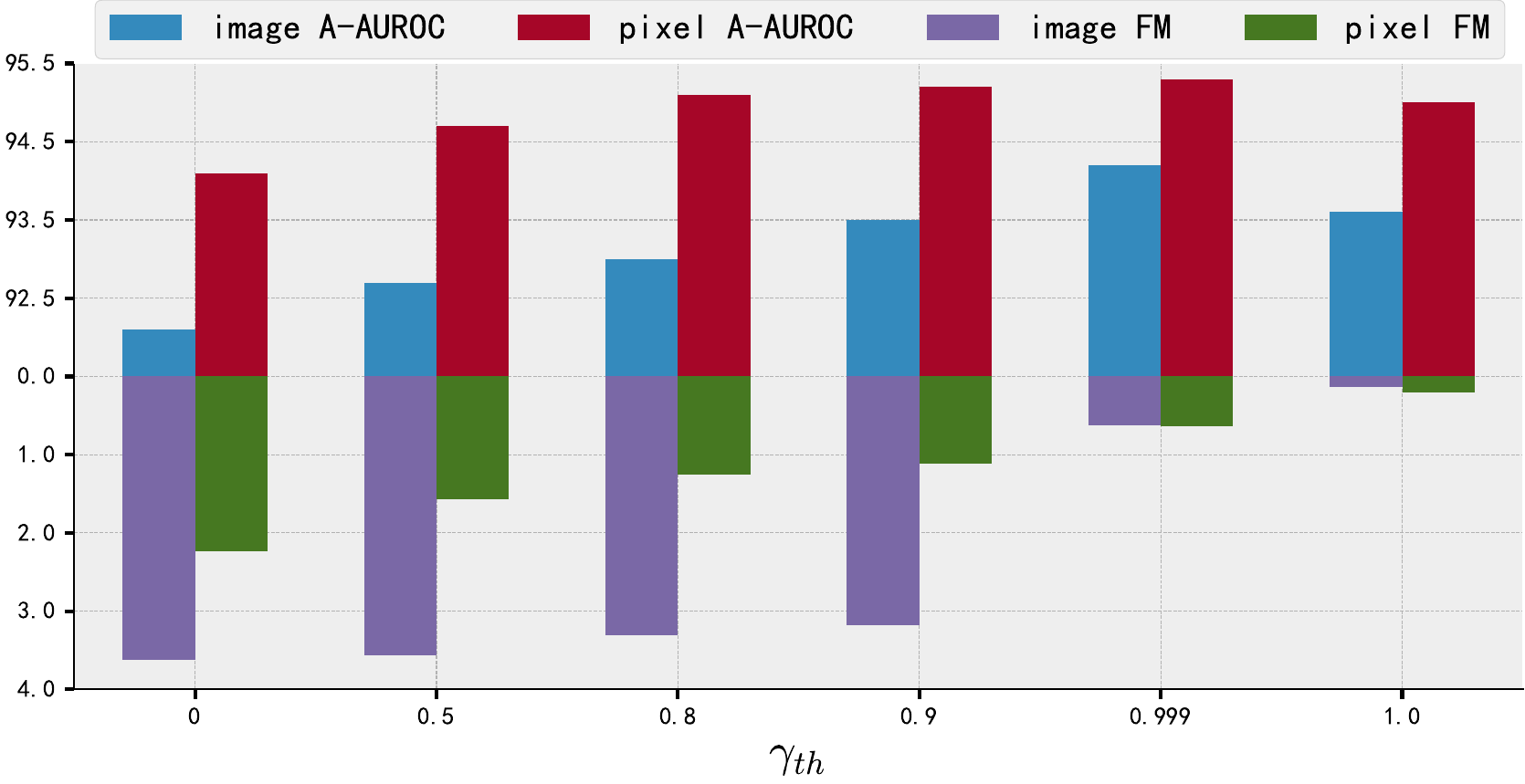}
   \vspace{-5pt}
   \caption{Impact of $\gamma_{th}$ on A-AUROC ($\uparrow$) and FM ($\downarrow$) under setting 2 of MVTec.}
   \vspace{-15pt}
  \label{gamma}
\end{figure}

\subsection{Hyper-parameter Analysis}

We complete the effect of the threshold $\gamma_{th}$ of k-rank approximation. $\gamma_{th}$ is the threshold for selecting the significant representation in the unitary matrix, used to control the amount of information retained from previous tasks. Figure \ref {ve} illustrates the effect of $\gamma_{th}$ under setting 2 of MVTec. As shown, when $\gamma_{th}=0$, the model experiences significant forgetting and performance degradation. As $\gamma_{th}$ increases, both the forgetting rate and performance improve. When $\gamma_{th}=1.0$, although the forgetting rate is minimal, performance drops due to the model retaining all column bases from previous tasks, which interferes with learning new tasks. Even when $\gamma_{th}=0.999$, the selected vectors are only a small subset of the unitary matrix, preserving the core knowledge of previous tasks while maintaining the model's plasticity to new tasks. Therefore, we set $\gamma=0.999$.

\section{Conclusion}
In this paper, we propose a continual diffusion model for anomaly detection termed CDAD, which continually learns anomaly detection for new tasks. First, we introduce a continual diffusion model framework that uses significant representation to project the training gradient of the new task to a direction having less influence on the previous tasks, to ensure the stability of the model. Second, we propose the iterative singular value decomposition, which greatly alleviates the memory pressure of computing the significant representation for diffusion models. Finally, an anomaly-masked network is proposed as a control mechanism for the diffusion model. The experimental results show that both the anomaly detection performance and anti-forgetting capability of our method are state-of-the-art.
\\

\noindent\textbf{Acknowledgments}
This work is supported by the National Natural Science Foundation of China No.62222602, 62176092, U23A20343, 62106075, 62302167, 62302296, 721928221, Shanghai Sailing Program (23YF1410500), Natural Science Foundation of Shanghai (23ZR1420400), Natural Science Foundation of Chongqing, China (CSTB2023NSCQ-JQX0007, CSTB2023NSCQ-MSX0137), CCF-Tencent RAGR20240122, and Development Project of Ministry of Industry and Information Technology (ZTZB.23-990-016).
{
    \small
    \bibliographystyle{ieeenat_fullname}
    \bibliography{main}
}

% WARNING: do not forget to delete the supplementary pages from your submission 
\clearpage
\maketitlesupplementary
\renewcommand\thesection{\Alph{section}}

\setcounter{page}{1}
\setcounter{section}{0}

\section{Experimental details}
\label{sec:details}
\noindent \textbf{Data pre-processing}. We employ the data pre-processing pipeline specified in DiAD \cite{diad} for both the MVTec \cite{MvTec} and VisA \cite{Visa} datasets to mitigate potential train-test discrepancies. This involves channel-wise standardization using precomputed mean $[0.485, 0.456, 0.406]$ and standard deviation $[0.229, 0.224, 0.225]$ after normalizing each RGB image to $[0, 1]$.

\noindent \textbf{Patch perturbation}. We adopt the method proposed by NSA \cite{NSA} for patch perturbation on the original images. The NSA method builds upon the Cut-paste technique \cite{CutPaste} and enhances it by incorporating the Poisson image editing method \cite{poisson} to alleviate the discontinuities caused by pasting image patches. The cut-paste method is commonly used in the anomaly detection domain to generate simulated anomalous images. It involves randomly cropping a patch from one image and pasting it onto a random location in another image, thus creating a simulated anomaly. The Poisson-based pasting method seamlessly blends the cloned object from one image into another by solving Poisson partial differential equations, thereby better simulating a realistic anomalous region. In this paper, the number of patches is set as a random value from $1$ to $4$, and the patch size is a random value from $0.03$ to $0.4$ of the original image size. The visualization of patch perturbation is shown in Figure \ref{nsa}.

\begin{figure}[h!]
  \centering
   \includegraphics[width=1\linewidth]{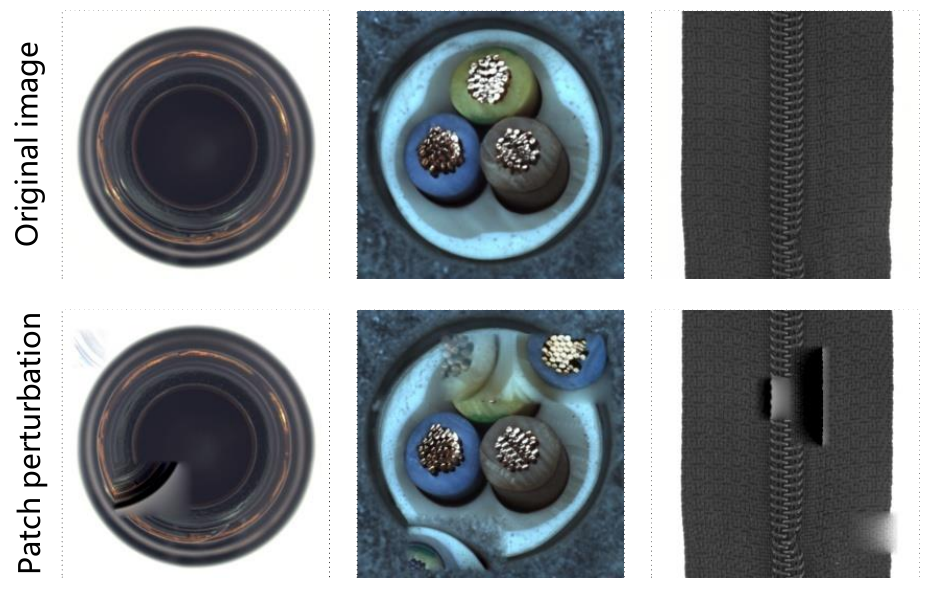}
   \caption{Qualitative results of logical anomaly detection.}
   \label{nsa}

\end{figure}
% \vspace{0.5\baselineskip}
\noindent \textbf{Evaluation metrics.} We follow the literature \cite{MvTec} in reporting the Area Under the Receiver Operation Characteristic (AUROC) for both image-level and pixel-level anomaly detection. To measure the performance of the model in continuous learning, referring to DNE \cite{dne}, we calculated the average AUROC (A-AUROC) and the forgetting measure (FM) for $N$ continual steps. Specially, we define $A_{N, i}^{\mathrm{pix}}$ and $A_{N, i}^{\mathrm{img}}$ as the test AUROC of task $i$ after training on task $N$.
\begin{equation}
    \mathrm{A\textbf{-}AUROC}=\left\{\begin{array}{l}
    \frac{1}{N} \sum_{i=1}^{N-1} A_{N, i}^{\mathrm{pix}} \\
    \frac{1}{N} \sum_{i=1}^{N-1} A_{N, i}^{\mathrm{img}}
    \end{array},\right.
    \label{a-auroc}
\end{equation}
% \vspace{-8pt}
\begin{equation}
    \mathrm{FM}=\left\{\begin{array}{l}
    \frac{1}{N-1} \sum_{i=1}^{N-1} \max _{b \in\{1, \cdots, N-1\}}\left(A_{b, i}^{\mathrm{pix}}-A_{N, i}^{\mathrm{pix}}\right) \\
    \frac{1}{N-1} \sum_{i=1}^{N-1} \max _{b \in\{1, \cdots, N-1\}}\left(A_{b, i}^{\mathrm{img}}-A_{N, i}^{\mathrm{img}}\right)
    \end{array}\right.
    \label{fm}
\end{equation}
In addition to the results for the AUROC documented in the body of the paper, We also supplement the image-level Precision-Recall (AUPR) results and pixel-level Per-region-overlap (PRO) \cite{MvTec, UniS-T} results. Referring to Equation (\ref{a-auroc}) and Equation (\ref{fm}), we calculate A-AUPR, A-PRO, and their FM to evaluate our method. The results are shown in Table \ref{mvtec_aupr} \ref{mvtec_pro} \ref{visa_aupr} \ref{visa_pro}. Our method still achieves an advanced level in the above metrics.
\section{Memory Analysis of iSVD}
\label{sec:details}
In Section 3.2, considering the storage of the original matrix and $\boldsymbol{U}, \boldsymbol{\Sigma}, \boldsymbol{V}$ during SVD, the memory overhead of SVD is $d\Lambda + d^2+\Lambda^2+\min(d, \Lambda)$, while iSVD uses a memory overhead of $d(m+k) + d^2 + (m+k)^2 + \min(d, m+k)$. It is known that $\Lambda \gg d$, $m > d > k$ and $\Lambda=mn$,  thus, the memory saving rate of iSVD is about:
\begin{equation}
    \begin{aligned}
       & \frac{d\Lambda + d^2+\Lambda^2+d - [d(m+k) + d^2 + (m+k)^2 + d]}{d\Lambda + d^2+\Lambda^2+d} \\
      = & \frac{d\Lambda +\Lambda^2 - d(m+k) - (m+k)^2 }{\Lambda^2}/\frac{d\Lambda + d^2+\Lambda^2+d}{\Lambda^2} \\
      = & \frac{d}{\Lambda} + 1 - \frac{d(m+k)}{\Lambda^2} - \frac{(m+k)^2}{\Lambda^2} \\
      \approx & 1-\frac{m^2+2mk+k^2}{m^2n^2} \\
      = & 1 - \frac{1}{n^2} - \frac{2k}{m\Lambda} - \frac{k^2}{\Lambda^2}
      \approx \frac{n^2-1}{n^2}.
    \end{aligned}
\label{eq:amn}
\end{equation} 
In practice, the actual memory saving rate differs from the theoretical value due to factors such as memory sharing. Taking the intermediate features of ten images as an example, Figure \ref{numblocks} shows the actual and theoretical memory saving rate of splitting the feature matrix into $n$ blocks for iSVD. Although there are some differences between the theoretical value and the actual value, the general trend is consistent.

\begin{table*}[]
\centering
\scalebox{0.94}{
\setlength{\tabcolsep}{1.8mm}
\begin{tabular}{c|cc|cc|cc|cc}
\toprule
\multicolumn{1}{c|}{\multirow{2}{*}{Method}}             & \multicolumn{2}{c|}{\textbf{14 -- 1 with 1 Step}} & \multicolumn{2}{c|}{\textbf{10 -- 5 with 1 Step}} & \multicolumn{2}{c|}{\textbf{3 × 5 with 5 Steps}} & \multicolumn{2}{c}{\textbf{10 -- 1 × 5 with 5 Steps}} \\ \cmidrule{2-9} 
                              & \textbf{A-AUPR ($\uparrow$)}            & \textbf{FM ($\downarrow$)}                 & \textbf{A-AUPR ($\uparrow$)}              & \textbf{FM ($\downarrow$)}                 & \textbf{A-AUPR ($\uparrow$)}              & \textbf{FM ($\downarrow$)}                 & \textbf{A-AUPR ($\uparrow$)}                & \textbf{FM ($\downarrow$)}                   \\ \midrule

UCAD* \cite{ucad}                    & 95.8          & 0.26          & 95.0            & {\ul 0.98}    & {\ul 93.1}      & {\ul 2.02}    & \underline{95.5} & {\ul 0.07}    \\
IUF \cite{iuf}                     & {\ul 97.8}    & {\ul 0.25}    & 95.4          & 1.92          & 91.1          & 2.86          & 95.3          & 0.16          \\
ControlNet \cite{controlnet}               & 97.2          & 1.55          & {\ul 96.7}    & 1.76          & 86.7          & 6.40           & 89.0            & 7.43          \\
DiAD \cite{diad}                     & 97.4          & 0.71          & 96.4          & 1.85          & 89.1          & 4.31          & 91.4          & 4.83          \\ \midrule

\rowcolor[RGB]{230,230,230}
\textbf{CDAD}                     & \textbf{98.4} & \textbf{0.08} & \textbf{98.3} & \textbf{0.55} & \textbf{95.8} & \textbf{1.88} & \textbf{98.4} & \textbf{0.02}          \\ \bottomrule
\end{tabular}
}
\caption{Image-level A-AUPR of our method on MVTec under 4 continual anomaly detection settings. The best and second-best results are marked in \textbf{blod} and \underline{underline}. $*$ indicates memory limited.}
\label{mvtec_aupr}
\end{table*}

\begin{table*}[]
\centering
\scalebox{0.94}{
\setlength{\tabcolsep}{2.2mm}
\begin{tabular}{c|cc|cc|cc|cc}
\toprule
\multicolumn{1}{c|}{\multirow{2}{*}{Method}}             & \multicolumn{2}{c|}{\textbf{14 -- 1 with 1 Step}} & \multicolumn{2}{c|}{\textbf{10 -- 5 with 1 Step}} & \multicolumn{2}{c|}{\textbf{3 × 5 with 5 Steps}} & \multicolumn{2}{c}{\textbf{10 -- 1 × 5 with 5 Steps}} \\ \cmidrule{2-9} 
                              & \textbf{A-PRO ($\uparrow$)}            & \textbf{FM ($\downarrow$)}                 & \textbf{A-PRO ($\uparrow$)}              & \textbf{FM ($\downarrow$)}                 & \textbf{A-PRO ($\uparrow$)}              & \textbf{FM ($\downarrow$)}                 & \textbf{A-PRO ($\uparrow$)}                & \textbf{FM ($\downarrow$)}                   \\ \midrule

UCAD \cite{ucad}                    & 86.3          & 1.16          & 80.7          & {\ul 2.89}    & { 71.1}    & { 7.48}          & {80.8} & {\ul 1.19}    \\
IUF \cite{iuf}                     & { 88.6}    & {\ul 0.62}    & 85.0            & 3.22          & \underline{72.9} & {\ul 5.79}          & {\ul 84.3}    & 2.41          \\
ControlNet \cite{controlnet}               & 88.5          & 1.75          & { 85.8}    & 4.70           & 71.5          & 10.0                  & 77.9          & 7.11          \\
DiAD \cite{diad}                    & {\ul 88.9}    & 0.85          & {\ul 87.4}    & 3.91          & 72.1          & 9.23                & 83.1          & 2.93          \\ \midrule

\rowcolor[RGB]{230,230,230}
\textbf{CDAD}                     & \textbf{89.8} & \textbf{0.47} & \textbf{88.9} & \textbf{2.57} & \textbf{83.8} & { \textbf{4.05}} & \textbf{89.2} & \textbf{1.16}            \\ \bottomrule
\end{tabular}
}
\caption{Pixel-level A-PRO of our method on MVTec under 4 continual anomaly detection settings. The best and second-best results are marked in \textbf{blod} and \underline{underline}. $*$ indicates memory limited.}
\vspace{-10pt}
\label{mvtec_pro}
\end{table*}

\begin{figure}[]
  \centering
   \includegraphics[width=1\linewidth]{}
   \caption{The theoretical and actual values of the memory saving ratio when the different number of split blocks is used.}
   \vspace{-10pt}
   \label{numblocks}
\end{figure}

In this paper, we determine the number of blocks $n$ according to the number of images of the old task. Specifically, we first sample the old task dataset, randomly retain 10\% of the images, and then group according to the number of sampled images (denoted as $N_{img}$). The number of groups in iSVD is $n=\frac{N_{img}}{e}$. In this paper, $e$ is set to $1$ by default, that is, the intermediate features of each image are divided into a separate group for iSVD operation. 

\begin{table}[h]
\begin{tabular}{c|cc|cc}
\toprule
    & \textbf{A-AUROC}     & \textbf{FM}         & \textbf{Memory} & \textbf{Times} \\ \midrule
e=1 & 94.2 / 95.3 & 2.05 / 2.40  & 16.7GB & 33.5h \\
e=2 & 94.1 / 95.4 & 2.10 / 2.32 & 30.3GB & 28.3h \\
e=4 & 94.4 / 95.6 & 1.95 / 2.21 & 57.9GB & 22.6h \\
e=6 & 94.5 / 95.7 & 1.91 / 2.23 & 85.9GB & 20.2h \\ \bottomrule
\end{tabular}
   \caption{The impact of different $e$ on the model and the time and memory overhead of iSVD under MVTec Setting 2.}
   \vspace{-10pt}
   \label{mem-times}
\end{table}

In addition, we analyze the impact of different $e$ on the model and the time and memory overhead. Table \ref{mem-times} records, for setting different $e$, the anomaly detection results of our model on MVTec setting 2, as well as the time and memory overhead for computing the significant representation of the old task. When $e$ is set to different values, the anomaly detection results and forgetting rate of the model will not have much influence. Although we discussed in Section 4.5 that the large number of split blocks will affect the performance of iSVD, it will not impair its representation ability of core information, so it can still ensure the continuous learning ability of the model. Table \ref{mem-times} also shows that with the increase of $e$, the memory consumption increases, but the time cost decreases, which indicates that although iSVD can greatly alleviate the pressure of memory, it will bring extra time cost.

\section{Qualitative Results}
\label{sec:details-quali}
We supplement the qualitative results on MVTec and VisA datasets, which show the localization image reconstruction results and anomaly localization results for the seven tasks, respectively, as shown in Figure \ref{qua1}-\ref{qua7}. Our method not only overcomes the ``faithfulness hallucination'' problem of the diffusion model but also shows excellent anomaly localization results.

\begin{table*}[]
\centering
\scalebox{0.9}{
\setlength{\tabcolsep}{5mm}
\begin{tabular}{c|cc|cc|cc}
\toprule
                      \multicolumn{1}{c|}{\multirow{2}{*}{Method}}      & \multicolumn{2}{c|}{\textbf{11 -- 1 with 1 Step}} & \multicolumn{2}{c|}{\textbf{8 -- 4 with 1 Step}} & \multicolumn{2}{c}{\textbf{8 -- 1 × 4 with 4 Steps}} \\ \cmidrule{2-7} 
\multirow{-2}{*}{\textbf{}} & \textbf{A-AUROC ($\uparrow$)}                 & \textbf{FM ($\downarrow$)}                     & \textbf{A-AUROC ($\uparrow$)}                 & \textbf{FM ($\downarrow$)}                      & \textbf{A-AUROC ($\uparrow$)}                    & \textbf{FM ($\downarrow$)}                       \\ \midrule
UCAD* \cite{ucad}                     & { 88.1}                 & 0.29                   & { 83.2}                 & {\ul 5.17}             & { 82.9}                 & {\ul 2.16}             \\
IUF \cite{iuf}                      & \textbf{91.6}              & {\ul -0.04}            & {\ul 83.4}                 & 7.51                   & {\ul 83.0}                   & 6.87                   \\
ControlNet \cite{controlnet}              & 85.2                       & 2.38                   & 78.8                       & { 6.25}             & 72.4                       & 4.56                   \\
DiAD \cite{diad}                    & 74.9                       & 5.42                   & 70.1                       & 12.29                  & 59.5                       & 9.55                   \\ \midrule
\rowcolor[RGB]{230,230,230}
\textbf{CDAD}                     & {\ul {89.4}}        & \textbf{-0.77}         & \textbf{85.3}              & \textbf{3.1}           & \textbf{84.7}              & \textbf{1.83}          \\ \bottomrule
\end{tabular}
}
\caption{Image-level A-AUPR of our method on VisA under 3 continual anomaly detection settings. The best and second-best results are marked in \textbf{blod} and \underline{underline}. $*$ indicates memory limited.}
\label{visa_aupr}
\end{table*}

\begin{table*}[]
\centering
\scalebox{0.9}{
\setlength{\tabcolsep}{6mm}
\begin{tabular}{c|cc|cc|cc}
\toprule
                      \multicolumn{1}{c|}{\multirow{2}{*}{Method}}      & \multicolumn{2}{c|}{\textbf{11 -- 1 with 1 Step}} & \multicolumn{2}{c|}{\textbf{8 -- 4 with 1 Step}} & \multicolumn{2}{c}{\textbf{8 -- 1 × 4 with 4 Steps}} \\ \cmidrule{2-7} 
\multirow{-2}{*}{\textbf{}} & \textbf{A-PRO ($\uparrow$)}                 & \textbf{FM ($\downarrow$)}                     & \textbf{A-PRO ($\uparrow$)}                 & \textbf{FM ($\downarrow$)}                      & \textbf{A-PRO ($\uparrow$)}                    & \textbf{FM ($\downarrow$)}                       \\ \midrule
UCAD* \cite{ucad}                     & {\ul 80.4}                & 2.02                   & {\ul 72.4}                & 7.46                   & {\ul 70.5}                & {\ul 9.83}             \\
IUF \cite{iuf}                     & 82.0                        & {\ul 1.04}             & 63.9                      & 20.8                   & 57.0                        & 23.95                  \\
ControlNet \cite{controlnet}               & 62.3                      & 2.45                   & 61.0                        & {\ul 1.81}             & 51.7                      & 10.38                  \\
DiAD \cite{diad}                    & 69.9                      & 4.33                   & 67.7                      & 8.29                   & 55.0                        & 11.74                  \\ \midrule

\rowcolor[RGB]{230,230,230}
\textbf{CDAD}                     & \textbf{81.6}             & \textbf{-0.22}         & \textbf{78.9}             & \textbf{1.59}          & \textbf{77.7}             & \textbf{1.66}         \\ \bottomrule
\end{tabular}
}
\caption{Pixel-level A-PRO of our method on VisA under 3 continual anomaly detection settings. The best and second-best results are marked in \textbf{blod} and \underline{underline}. $*$ indicates memory limited.}
\label{visa_pro}
\end{table*}

\begin{figure*}[h!]
  \centering
   \includegraphics[width=1.0\linewidth]{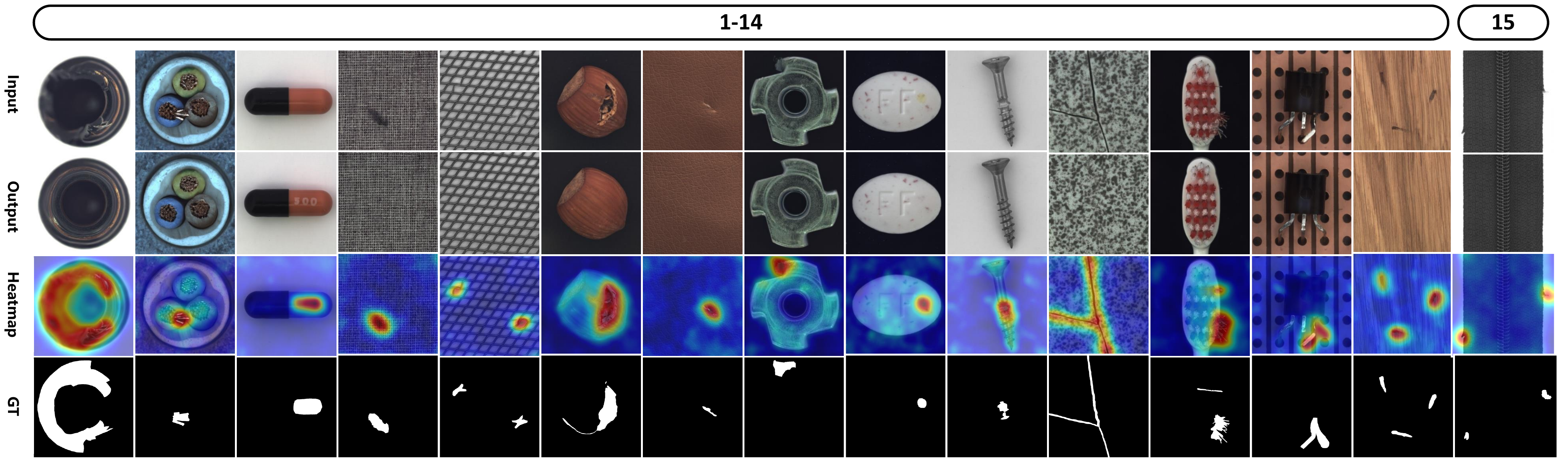}
    
   \caption{Qualitative comparison results under setting 1 of MVTec, the numbers represent continual training classes.}
   \label{qua1}
\end{figure*}

\begin{figure*}[h!]
  \centering
   \includegraphics[width=1.0\linewidth]{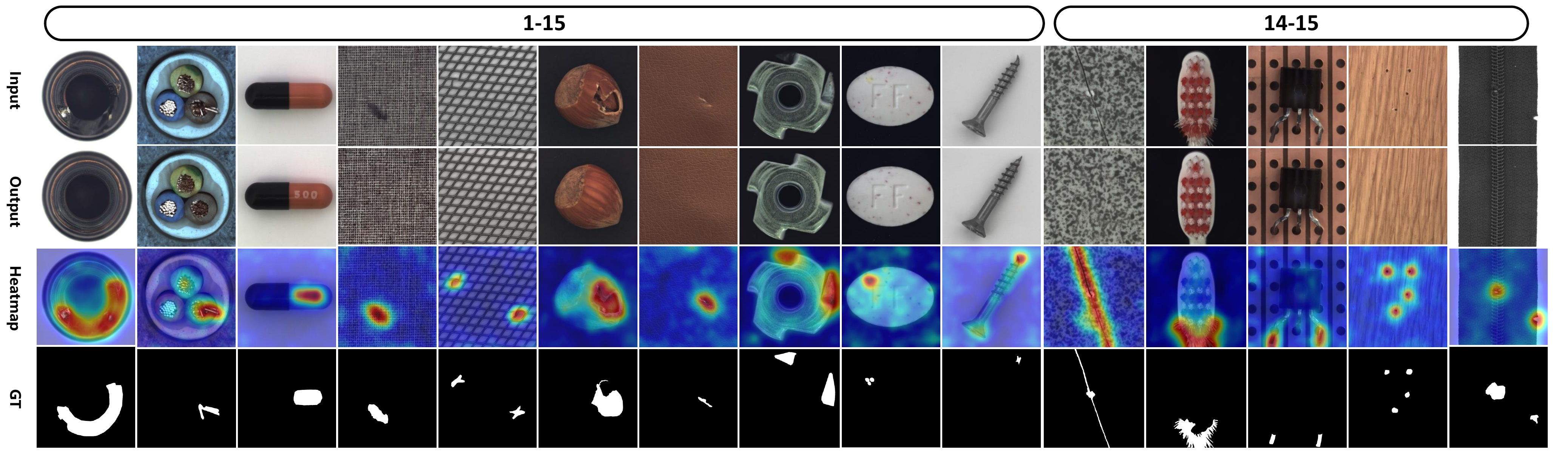}
    
   \caption{Qualitative comparison results under setting 2 of MVTec, the numbers represent continual training classes.}
   \label{qua2}
\end{figure*}

\begin{figure*}[h!]
  \centering
   \includegraphics[width=1.0\linewidth]{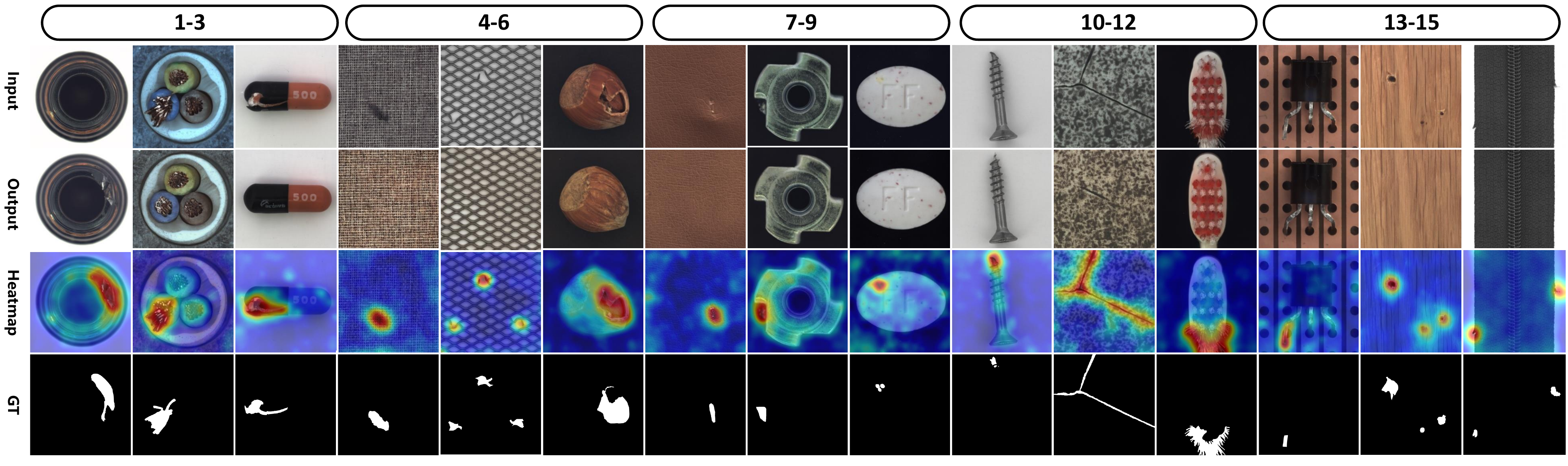}
    
   \caption{Qualitative comparison results under setting 3 of MVTec, the numbers represent continual training classes.}
   \label{qua3}
\end{figure*}

\begin{figure*}[h!]
  \centering
   \includegraphics[width=1.0\linewidth]{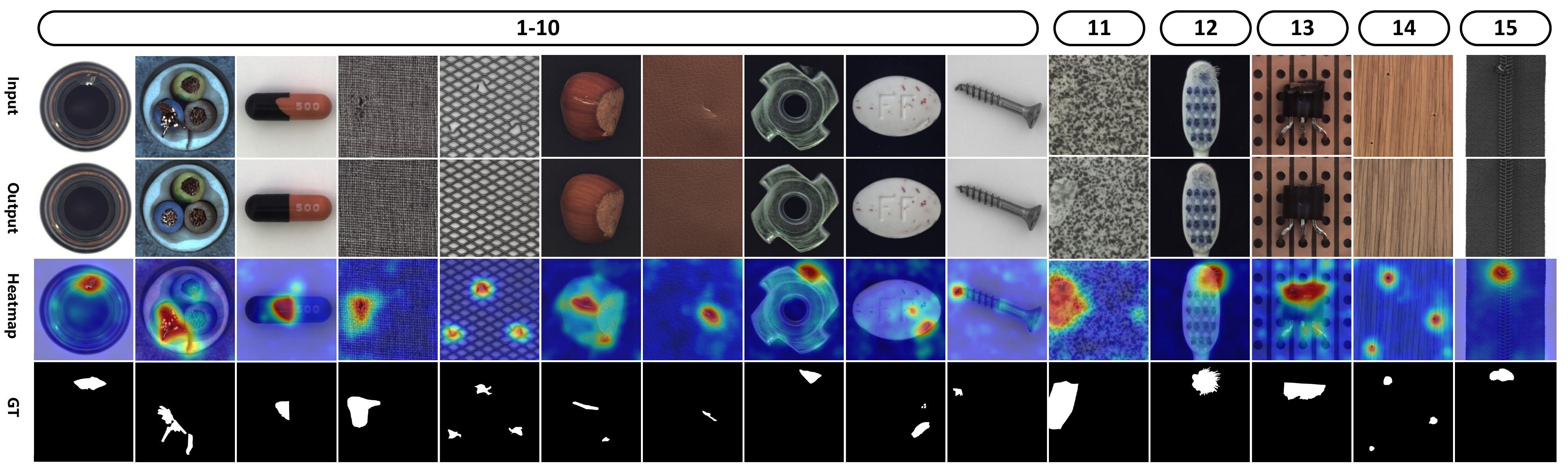}
    
   \caption{Qualitative comparison results under setting 4 of MVTec, the numbers represent continual training classes.}
   \label{qua4}
\end{figure*}

\begin{figure*}[h!]
  \centering
   \includegraphics[width=1.0\linewidth]{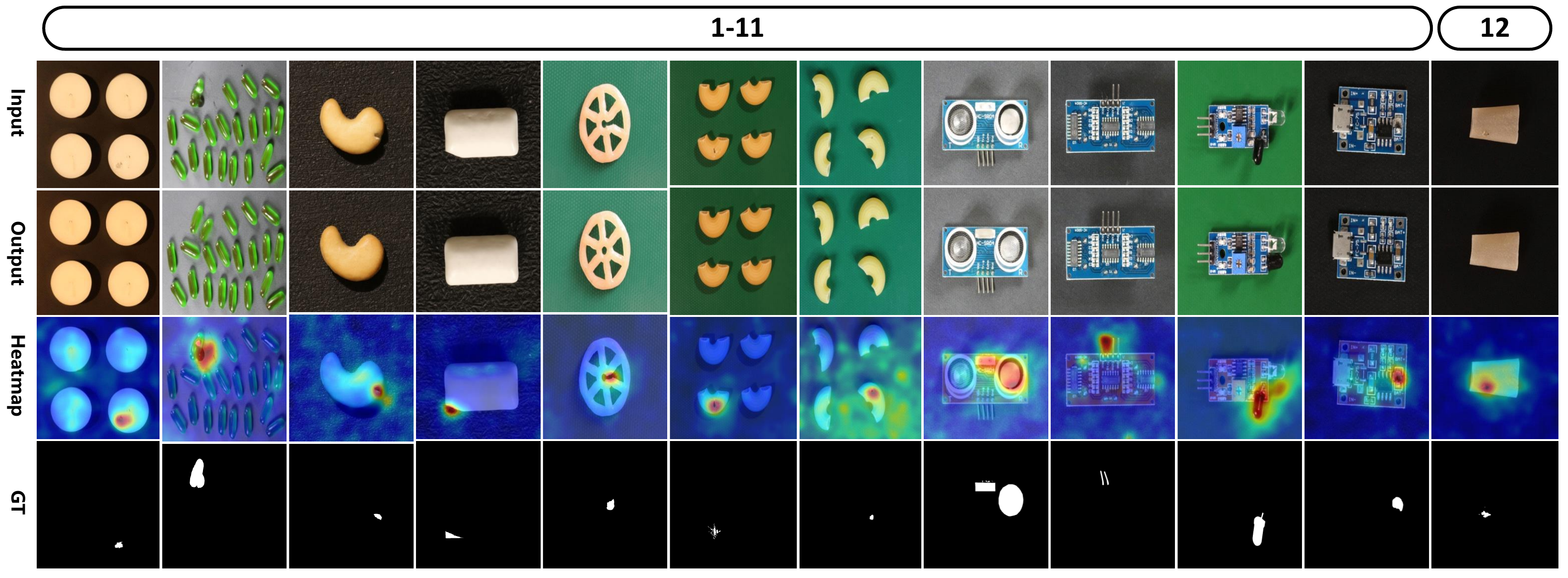}
    
   \caption{Qualitative comparison results under setting 5 of VisA, the numbers represent continual training classes.}
   \vspace{50pt}
   \label{qua5}
\end{figure*}

\begin{figure*}[h!]
  \centering
   \includegraphics[width=1.0\linewidth]{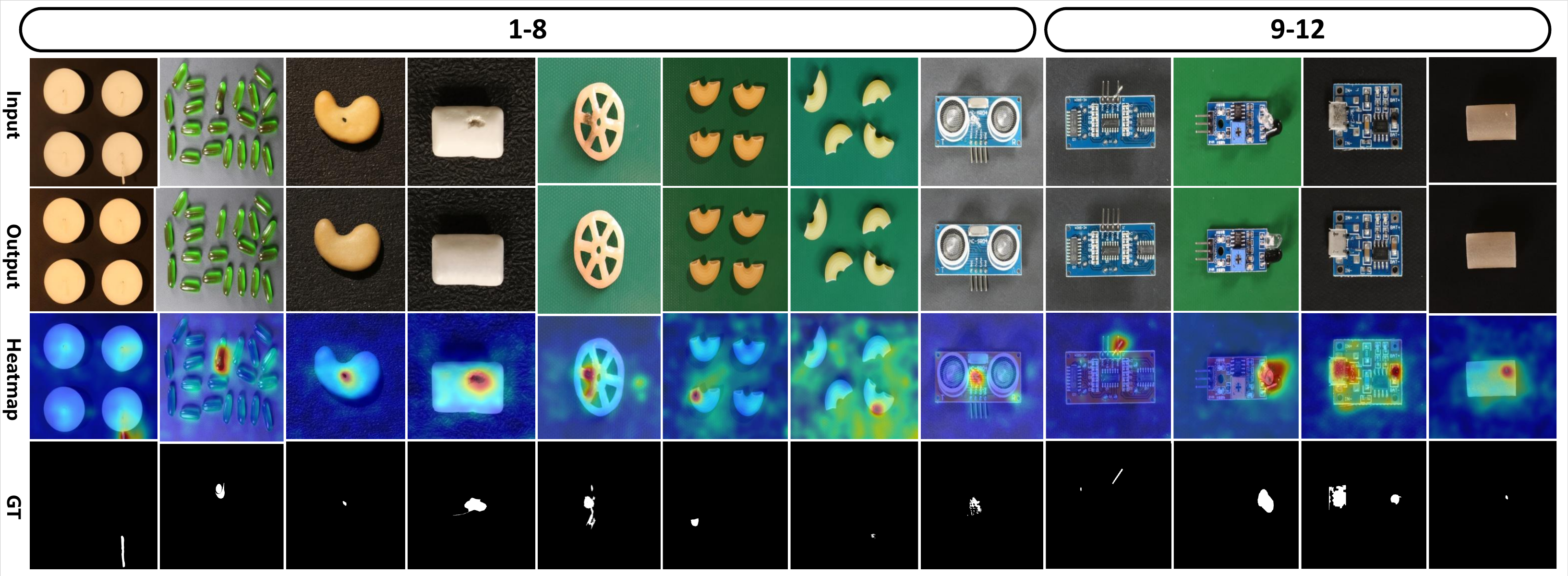}
    
   \caption{Qualitative comparison results under setting 6 of VisA, the numbers represent continual training classes.}
   \label{qua6}
\end{figure*}

\begin{figure*}[h!]
  \centering
   \includegraphics[width=1.0\linewidth]{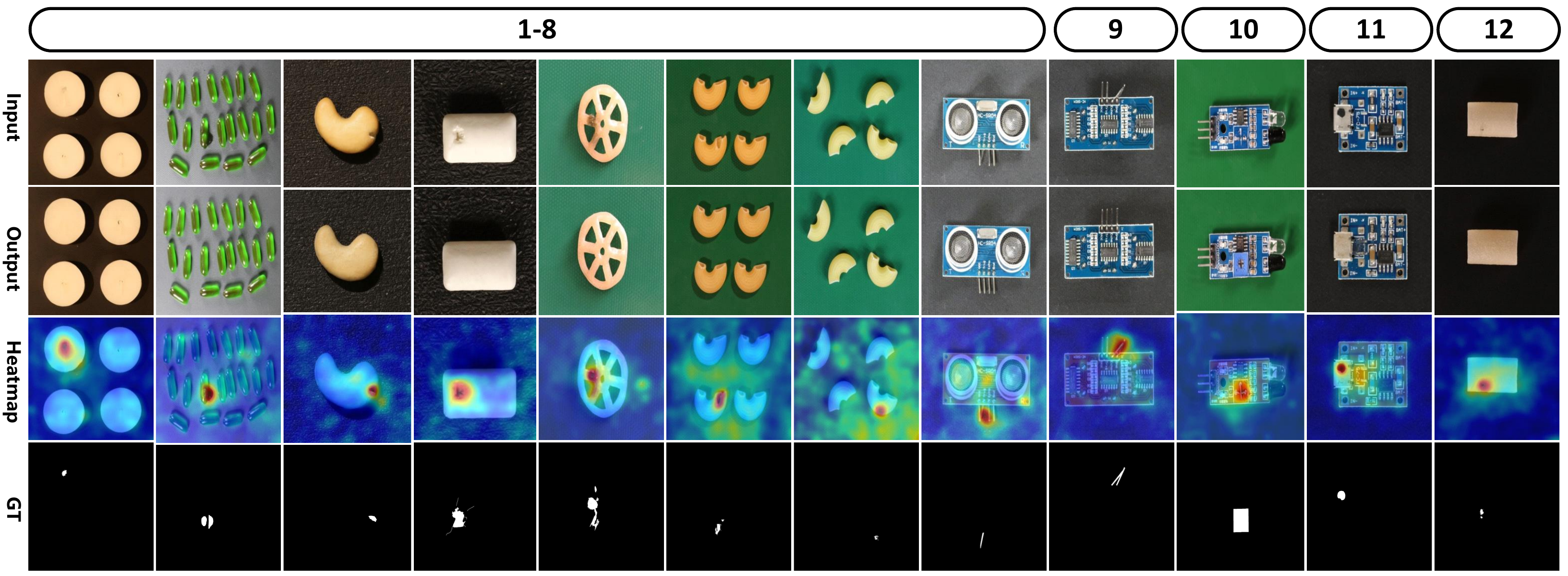}
    
   \caption{Qualitative comparison results under setting 7 of VisA, the numbers represent continual training classes.}
   \label{qua7}
\end{figure*}

% {
%     \small
%     \bibliographystyle{ieeenat_fullname}
%     \bibliography{main}
% }

\end{document}